\documentclass{article}

% if you need to pass options to natbib, use, e.g.:
%     \PassOptionsToPackage{numbers, compress}{natbib}
% before loading neurips_2023

% ready for submission
% \usepackage{neurips_2023}

% to compile a preprint version, e.g., for submission to arXiv, add add the
% [preprint] option:
%     \usepackage[preprint]{neurips_2023}

% to compile a camera-ready version, add the [final] option, e.g.:
    \usepackage[final]{neurips_2023}

% to avoid loading the natbib package, add option nonatbib:
%    \usepackage[nonatbib]{neurips_2023}

\usepackage{soul}
\usepackage{hyperref}
\usepackage{url}
\usepackage{booktabs}       % professional-quality tables
\usepackage{amsfonts}       % blackboard math symbols
\usepackage{nicefrac}       % compact symbols for 1/2, etc.
\usepackage{microtype}      % microtypography
\usepackage{xcolor}         % colors

\usepackage[utf8]{inputenc} % allow utf-8 input
\usepackage[T1]{fontenc}    % use 8-bit T1 fonts
\usepackage{hyperref}       % hyperlinks
\usepackage{url}            % simple URL typesetting
\usepackage{booktabs}       % professional-quality tables
\usepackage{amsfonts}       % blackboard math symbols
\usepackage{nicefrac}       % compact symbols for 1/2, etc.
\usepackage{microtype}      % microtypography
\usepackage{xcolor}         % colors

\usepackage{graphicx}
\usepackage{amsthm}
\usepackage{amssymb}
\usepackage{amsmath}
\usepackage{mathtools}
\usepackage{dirtytalk}
\usepackage{subcaption}
\usepackage{wrapfig}
\usepackage{multirow}
\usepackage{placeins}
\usepackage[noend,ruled,vlined,linesnumbered]{algorithm2e}  % algo2e,

\SetCommentSty{mycommfont}

\usepackage{color}
\usepackage{soul}
\usepackage{ulem}
\usepackage[capitalise]{cleveref}

% \gdef\cleanversion{1}

% \usepackage{hyperref}
% \usepackage{url}
% \usepackage{booktabs}       % professional-quality tables
% \usepackage{amsfonts}       % blackboard math symbols
% \usepackage{nicefrac}       % compact symbols for 1/2, etc.
% \usepackage{microtype}      % microtypography
% \usepackage{xcolor}         % colors

% \usepackage{graphicx}
% \usepackage{amsthm}
% \usepackage{amssymb}
% \usepackage{amsmath}
% \usepackage{mathtools}
% \usepackage{dirtytalk}
% \usepackage{subcaption}
% \usepackage{wrapfig}
% \usepackage{placeins}
% % \usepackage[section]{placeins}
% \usepackage[algo2e,noend,ruled,vlined,linesnumbered]{algorithm2e}
% \newcommand\mycommfont[1]{\footnotesize\ttfamily\textcolor{blue}{#1}}
% \SetCommentSty{mycommfont}
% \renewcommand\RSsmallest{6pt}
% \usepackage{color}
% \usepackage{soul}
% \usepackage{ulem}
% \usepackage[capitalise]{cleveref}

% \setlength{\textfloatsep}{6pt}
% % \newcommand\bigzero{\makebox(0,0){\text{\huge 0}}}

\theoremstyle{definition}
\newtheorem{definition}{Definition}%[section]
\newtheorem{assumption}{Assumption}
\newtheorem{problem}{Problem}
\newtheorem{theorem}{Theorem}
\newtheorem{algo}{Algorithm}

%[section]
\newtheorem{proposition}{Proposition}%[section]
%[section]

%%%%%% COMMENTS %%%%%%

\ifdefined\cleanversion

    \newcommand{\rd}[1]{}
    \newcommand\ido[1]{}
\else

    \newcommand{\rd}[1]{{\textcolor{red}{\sout{{#1}}}}}
    \definecolor{red}{rgb}{0.95, 0.5, 0.5}
    \newcommand\ido[1]{\textcolor{red}{\textbf{IG:} #1 }}
\fi

%%%%%% REPOS %%%%%%

 %{https://github.com/...}
% \newcommand{\repobase}{https://anonymous.4open.science/r/OKF_anonymous_tmp-8F94}

% \newcommand{\pypi}[1]{#1}
\newcommand{\pypi}[1]{\href{\pypibase}{\underline{#1}}}
\newcommand{\repo}[1]{\href{\repobase}{\underline{#1}}}

%%%%%% TITLE %%%%%%

% \newcommand{\mytitle}{Non-Linear Filtering with Linear Kalman Filter}
% \newcommand{\mytitle}{Optimized Linear Kalman Filtering for Non-Linear Systems}
% \newcommand{\mytitle}{Optimization or Architecture: What Matters in Filtering for Non-Linear Systems?}
% \newcommand{\mytitle}{Optimization or Architecture:\\What Matters in Non-Linear Filtering?}
% \newcommand{\mytitle}{Optimization or Architecture: What Matters in Filtering?}
% \newcommand{\mytitle}{Optimization or Architecture: How to Hack Filtering}
\newcommand{\mytitle}{Optimization or Architecture:\\How to Hack Kalman Filtering}

\title{\mytitle}

% The \author macro works with any number of authors. There are two commands
% used to separate the names and addresses of multiple authors: \And and \AND.
%
% Using \And between authors leaves it to LaTeX to determine where to break the
% lines. Using \AND forces a line break at that point. So, if LaTeX puts 3 of 4
% authors names on the first line, and the last on the second line, try using
% \AND instead of \And before the third author name.

\author{
  Ido Greenberg \\ %\thanks{Use footnote for providing further information about author (webpage, alternative address)---\emph{not} for acknowledging funding agencies.} \\
  Technion \\
  \texttt{gido@campus.technion.ac.il}
  \And
  Netanel Yannay \\
  ELTA Systems \\
  \texttt{natiy4@gmail.com}
  \And
  Shie Mannor \\
  Technion, Nvidia Research \\
  \texttt{shie@ee.technion.ac.il}
}

\begin{document}

\maketitle

% \vspace{-5pt}
\begin{abstract}
In non-linear filtering, it is traditional to compare non-linear architectures such as neural networks to the standard linear Kalman Filter (KF).
We observe that this mixes the evaluation of two separate components: the non-linear architecture, and the parameters optimization method.
In particular, the non-linear model is often optimized, whereas the reference KF model is not.
% To evaluate a non-linear architecture against a linear one, 
We argue that \textit{both} should be optimized similarly, and to that end present the Optimized KF (\textbf{OKF}).
% We suggest the Optimized KF (\textbf{OKF}), which adjusts numeric optimization to the positive-definite KF parameters.
We demonstrate that the KF may become competitive to neural models -- if optimized using OKF.
% We demonstrate how a significant advantage of a neural network over the KF may \textit{entirely vanish} once the KF is optimized using OKF.
This implies that experimental conclusions of certain previous studies were derived from a flawed process.
% While using the same linear architecture, as the KF, we study its advantage over the KF theoretically and empirically.
The advantage of OKF over the standard KF is further studied theoretically and empirically, in a variety of problems.
Conveniently, OKF can replace the KF in real-world systems by merely updating the parameters.
Our experiments are published in \repo{Github}, and the OKF in \pypi{PyPI}.
\end{abstract}

%%%%%%%%%%%%%%%%%%%%%%%%%%%%%%%%%%%%%%%%%%%%%%%%%
%%%%%%%%%%%%%%%%%%%%%%%%%%%%%%%%%%%%%%%%%%%%%%%%%

\section{Introduction}
\label{sec:intro}

The Kalman Filter (KF)~\citep{KF} is a celebrated method for linear filtering and prediction, with applications in many fields including tracking, navigation, control and reinforcement learning \citep{KF_practical_guide,tracking_and_navigation,rl_drones}. % guidance
The KF provides optimal predictions under certain assumptions (namely, linear models with i.i.d noise).
In practical problems, these assumptions are often violated, rendering the KF sub-optimal and motivating the growing field of non-linear filtering.
Many studies demonstrated the benefits of non-linear models over the KF \citep{KalmanNet,pose_estimation}.

% ,bai2020neuron,navigation_using_RNN,ManeuveringTargetTracking2,ANN_vs_EKF,KF_vs_NN_for_batteries,learning_in_indoor_navigation,KF_with_ANN,RNN_EKF}.

Originally, we sought to join this line of works. Motivated by a real-world Doppler radar problem, we developed a dedicated non-linear Neural KF (NKF) based on the LSTM sequential model.
% NKF is based on the LSTM model, which is a key component in many SOTA algorithms for non-linear sequential prediction~\citep{process_prediction_review}.
NKF achieved significantly better accuracy than the linear KF. %, as presented in \cref{sec:nkf}.

Then, during ablation tests, we noticed that the KF and NKF differ in \textit{both architecture and optimization}. % between the NKF and the KF, effectively comparing ``apples and oranges''. %; i.e., both the model architecture and the method used to tune its parameters.
% To avoid comparing ``apples and oranges'',
Specifically, the KF's noise parameters are traditionally determined by noise estimation~\citep{ALS}; whereas NKF's parameters are optimized using supervised learning.
To fairly evaluate the two architectures, we wished to apply the same optimization to both. %, namely, sequential supervised learning.
% To apply the same supervised learning to the KF, whose parameters are positive-definite, we used the Cholesky parameterization.
To that end, we devised an Optimized KF (\textbf{OKF}, \cref{sec:okf}).
% To optimize the positive-definite KF parameters, we used the Cholesky parameterization.
% This resulted in an Optimized KF (\textbf{OKF}, \cref{sec:okf}).
KF and OKF have the same linear architecture:
% The sole difference, is that OKF considers the noise parameters as "something to be optimized".
OKF only changes the noise parameters values.
% on inference, the sole difference is the noise parameters values.
% To optimize the positive-definite KF parameters, we used the Cholesky parameterization along with standard sequential supervised learning.
% This Optimized KF (\textbf{OKF}) is introduced in \cref{sec:okf}.
% Although OKF has the same linear architecture as the standard KF
Yet, unlike KF, OKF \textit{outperformed} NKF, which reversed the whole experimental conclusion, and made the neural network unnecessary for this problem (\cref{sec:nkf}).
% , and \textit{eliminated} the need for the complicated NKF model.

Our original error was comparing
% Our original errorneous methodology compared
two different model architectures (KF and NKF) that were not optimized similarly.
A review of the non-linear filtering literature reveals that this methodology is used in many studies. %: two different model architectures are compared, but are not optimized similarly.
% many studies adopt a similar methodology to our original one: comparing two different model architectures that are not optimized in a similar manner.
Specifically, for a baseline KF model, the parameters are often tuned by noise estimation \citep{navigation_using_RNN,ANN_vs_EKF,KalmanNet}; by heuristics \citep{learning_in_indoor_navigation,pose_estimation,KF_with_ANN}; or are simply ignored \citep{ManeuveringTargetTracking2,bai2020neuron,RNN_EKF}, often without public code for examination.
% the complex, non-linear model \textit{may} be beneficial;
% % It is of course possible that the complex, non-linear models in these studies are beneficial;
% however, this conclusion cannot be inferred from the experimental evidence.
\citet{KF_vs_NN_for_batteries} even discusses the (Extended-)KF sensitivity to its parameters, and suggests a neural network with supervised learning -- yet never considers the same supervised learning for the KF itself.
% In most of the examples, no code is publicly available either.
In all these 10 studies, the non-linear model's added value cannot be inferred from the experimental evidence.

So far, OKF is presented as a \textit{methodological contribution}: for comparison with non-linear methods, it forms a more coherent baseline than KF.
% it is closer than KF to standard non-linear methods, and can be used as a reliable baseline for comparison.
In addition, OKF provides a \textit{practical contribution}: it achieves more accurate filtering than the KF, using identical architecture.
% outperforms the KF and provides more accurate filtering.
% The superior accuracy of OKF is consistently demonstrated
This is demonstrated extensively in \cref{sec:experiments} and \cref{app:okf} -- in different domains, %  (radar, video and lidar)
over different problem variations, using different KF baselines, with different data sizes, and even under distributional shifts.
% over different violations of KF assumptions
% All our experiments are available on \repo{Github}.

% OKF is not just an experimental baseline, but also a filtering algorithm in its own right.
% We extensively compare OKF to the KF in \cref{sec:experiments} and \cref{app:okf} -- over different violations of KF assumptions, in different domains, %  (radar, video and lidar)
% using different KF baselines, with different amounts of train data, and even under distributional shifts.
% In \textit{all} of these settings, OKF consistently achieves superior test accuracy. % compared to the standard KF.
% % as well as lower sensitivity to the KF configuration (e.g., to the choice of coordinate system).

% , which is available on \pypi{PyPI},
%  (link removed to maintain double blindness)
\textbf{Discrepancy of objectives}:
The advantage of OKF over KF may come as a surprise:
KF's standard noise estimation is known to already obtain the MSE-optimal parameters!
% when tuning the linear architecture's parameters, standard noise estimation is known to be already MSE-optimal!
% for their linear architecture, tuning the parameters via KF's standard noise estimation is known to be already MSE-optimal!
% Notice that OKF outperforms KF using the same linear architecture. This may come as a surprise: for this architecture, tuning the KF by standard noise estimation is known to be already MSE-optimal!
However, this optimality relies on unrealistic assumptions, often considered \say{{fairy tales for undergraduates}} \citep{fairytales}.
When violated, a conflict emerges between noise estimation and MSE optimization. We study this in detail:
% leaving room for optimization by OKF. %, and their violation may be difficult to even notice. % commonly
% We study the emerged conflict between noise estimation and MSE optimization; for example:
(a) \cref{sec:experiments} analyzes the conflict theoretically under certain assumption violations; (b) \cref{app:okf_detailed} shows that even oracle noise estimation cannot optimize the MSE; (c) \cref{app:okf_train_size} shows that
when using noise estimation, \textbf{the MSE may degrade with more data}.
% feeding more data to noise estimation may in fact \textit{degrade} the MSE.
% \cref{sec:experiments} analyzes two violations -- non-linear dynamics and non-i.i.d noise -- and theoretically explains the KF's sub-optimality.
% Disturbingly, such violations may even cause the KF errors to deteriorate with the train data size (\cref{app:okf_train_size}).
In this light, all our findings can be summarized as follows (also see \cref{fig:main}):

\textbf{Contribution:}
(a) We observe that in most scenarios, since the KF assumptions do not hold, noise estimation is \textit{not} a proxy to MSE optimization.
(b) We thus present the Optimized KF (OKF), also available as a \pypi{PyPI package}.
We analyze (theoretically and empirically) the consequences of neglecting to optimize the KF:
(c) the standard KF tuning method leads to sub-optimal predictions; (d) the standard methodology in the literature compares an optimized model to a non-optimized KF, hence may produce misleading conclusions. Note that we \textit{do not} argue against the non-linear models; rather, we claim that their added value cannot be deduced from flawed experiments.

% The sub-optimality of the KF may come as a surprise, as the standard method for KF tuning is known to be optimal under certain assumptions.
% % the fragility of these assumptions
% However, these assumptions are quite fragile, as discussed in \cref{sec:experiments}: their violation is not only common, but also difficult to notice in certain cases.
% We theoretically analyze the effect of two different violations -- non-linear model and non-i.i.d noise -- and show that a single violation may result in significant changes of the optimal parameters.
% In fact, since the standard KF no longer optimizes the filtering errors, the errors sometimes \textit{deteriorate with the amount of train data} (\cref{app:okf_train_size}).

% \begin{wrapfigure}[13]{}{0.52\textwidth} % I
\begin{figure}
% \centering
  % \vspace{-33pt}
  \begin{center}
    \includegraphics[width=0.9\linewidth]{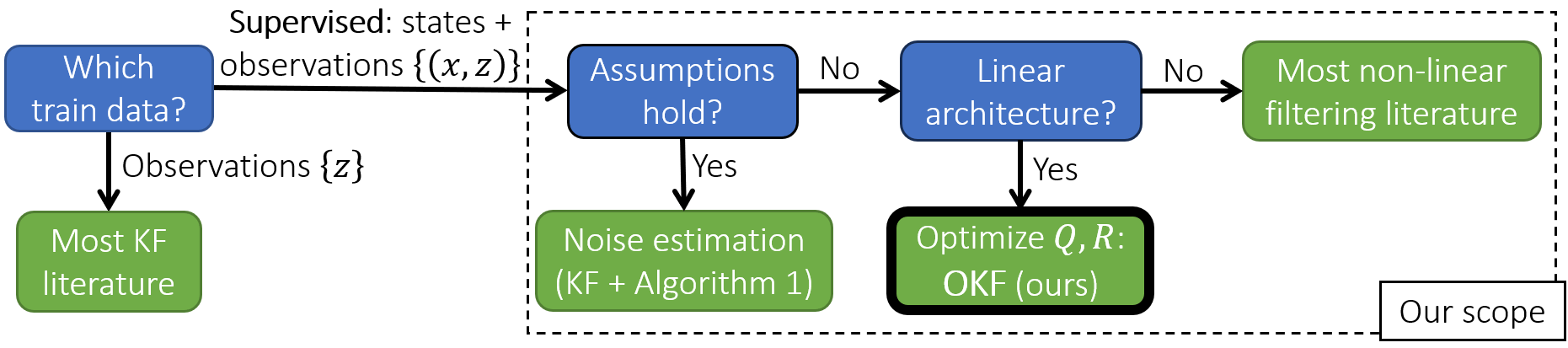}
  \end{center}
\caption{\small Since the KF assumptions are often violated, noise estimation does not optimize the MSE. Instead, our method (OKF) optimizes the MSE directly. In particular, neural network models should be tested against OKF rather than the non-optimized KF -- in contrast to the common practice in the literature.}
\label{fig:main}
\end{figure}
% \end{wrapfigure}

% The optimized noise parameters carry interpretable physical meaning regarding the \textit{effective} noise. For example, if the observation model is non-linear, the \textit{projection} of the observation onto the state space becomes noisy.
% As analyzed both theoretically and empirically in \cref{sec:experiments}, this increases the effective noise, as it accumulates on top of the inherent observation uncertainty.

% For the two assumptions of linearity and i.i.d noise, we analyze theoretically the effect of their violation, and show that even small violations may change the optimal parameters significantly.

% also mention winning the oracle baseline in \cref{app:okf_detailed}.?
% In addition to the empirical evidence, we show theoretically that even small violations may cause a significant deviation of the optimal parameters.

\textbf{Scope:}
\textbf{We focus on the supervised filtering setting},
%, which is widely studied in non-linear filtering, yet is somewhat overlooked by linear Kalman filtering.
% Similarly to the studies cited above, our work focuses on the \textit{supervised} filtering setting.
% Under this setting,
where training data includes both observations and the true system states (whose prediction is usually the objective). % are available in the training data. %, thus their prediction can be learned in a supervised manner.
Such data is available in many practical applications.
% This setting is commonly encountered in practical applications.
For example, the states may be provided by external accurate sensors such as GPS \citet{navigation_using_RNN}; %(e.g., GPS as approximated ground-truth for inertial measurements, \citet{navigation_using_RNN});
by manual object labeling in computer vision \citep{deep_sort}; by controlled experiments of radar targets; or by simulations of known dynamic systems.
% Such data is often available from controlled experiments, simulations or manual labeling.

As demonstrated in the 10 studies cited above, this supervised setting is common in non-linear filtering.
In linear Kalman filtering, this setting seems to be solved by trivial noise estimation; thus, the literature tends to overlook it, and instead focuses on settings that do not permit trivial noise estimation, e.g., learning from observations alone.
Nevertheless, we argue that even in the supervised setting, noise estimation is often not a proxy to MSE optimization, and thus should often be avoided.

\section{Preliminaries}
\label{sec:preliminaries}

% The KF~\citep{KF,KF_fresh_look} relies on the following model for a dynamic system:
Consider the KF model for a dynamic system with no control signal \citep{KF}: %,KF_fresh_look}:
\begin{align}
\label{eq:KF_model}
\begin{split}
    X_{t+1}  = F_tX_t + \omega_t \quad  (\omega_t\sim \mathcal{N}(0,Q)), \qquad\quad Z_{t}  = H_tX_t + \nu_t \quad  (\nu_t\sim \mathcal{N}(0,R)) .
    % X_{t+1} &= F_tX_t + \omega_t \qquad (\omega_t\sim \mathcal{N}(0,Q)) \\           Z_{t} &= H_tX_t + \nu_t \qquad (\nu_t\sim \mathcal{N}(0,R)) .
\end{split}
\end{align}
$X_t$ is the system state at time $t$, and its estimation is usually the goal. Its dynamics are modeled by the linear operator $F_t$, with random noise $\omega_t$ whose covariance is $Q$. $Z_t$ is the observation, modeled by the operator $H_t$ with noise $\nu_t$ whose covariance is $R$.
The notation may be simplified to $F,H$ in the stationary case.
% The notation may be simplified to $F,H$ if the operators are not assumed to depend on time.

The KF represents $X_t$ via estimation of the mean $\hat{x}_t$ and covariance $\hat{P}_t$. % of a normal distribution.
As shown in \cref{fig:KF}, the KF alternately predicts the next state (\textit{prediction} step), and processes new information from incoming observations (\textit{update} or \textit{filtering} step). % using the dynamics model
The KF relies on the matrices $\tilde{F}_t,\tilde{H}_t,\hat{Q},\hat{R}$, intended to represent $F_t,H_t,Q,R$ of \cref{eq:KF_model}.
Whenever $F_t,H_t$ are known and stationary, we may simplify the notation to $\tilde{F}_t=F,\,\tilde{H}_t=H$.

% \begin{wrapfigure}[13]{}{0.52\textwidth} % I
\begin{figure}%[!t]
% \centering
  % \vspace{-33pt}
  \begin{center}
    \includegraphics[width=0.55\linewidth]{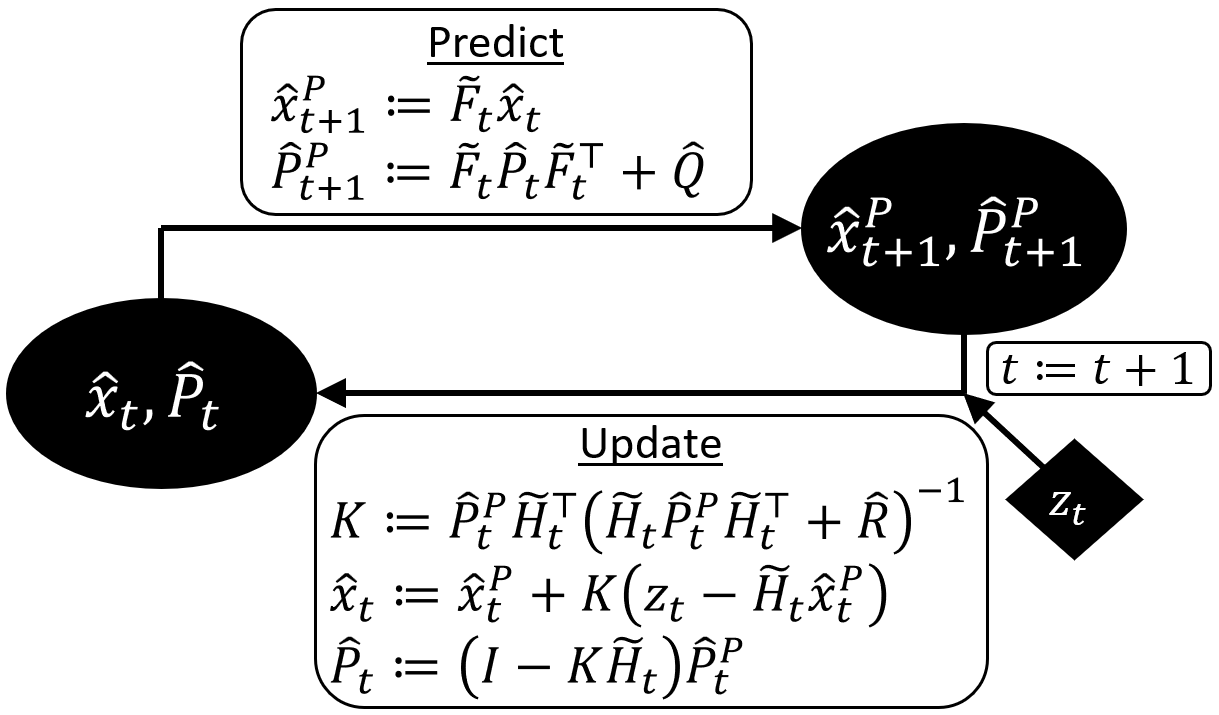}
  \end{center}
\caption{\small The KF algorithm. The prediction step is based on the motion model $\tilde{F}_t$ with noise $\hat{Q}$, whereas the update step is based on the observation model $\tilde{H}_t$ with noise $\hat{R}$.}
\label{fig:KF}
\end{figure}
% \end{wrapfigure}

The KF estimator $\hat{x}_t$ is optimal in terms of mean square errors (MSE) -- but only under a restrictive set of assumptions~\citep{KF}: %, as specified below.
% in Assumption~\ref{assumption:KF}.
% Note that normality of the noise is excluded: it would guarantee that the KF is the optimal model, but it is not necessary for optimality of the parameters \textit{within} a KF \citep{KF_fresh_look}.
% Note that normality of the noise is excluded since it is not necessary for optimality of the parameters~\citep{KF_fresh_look}.
%
\begin{assumption}[KF assumptions]
\label{assumption:KF}
% According to the assumptions of KF,
$\tilde{F}_t=F_t,\,\tilde{H}_t=H_t$ are known and independent of $X_t$ (linear models); each sequence $\{\omega_t\},\{\nu_t\}$ is i.i.d; the covariances $\hat{Q}=Q,\,\hat{R}=R$ are known; and $\hat{x}_0,\hat{P}_0$ correspond to the mean and covariance of the initial $X_0$. %distribution of the random initial state $X_0$ is known.
\end{assumption}
%
% \begin{theorem}[KF optimality; e.g., see Sections 3.2 and C in \citet{KF_fresh_look}]
\begin{theorem}[KF optimality; e.g., see \citet{jazwinski_book,KF_fresh_look}]
\label{theorem:optimality}
Under \cref{assumption:KF}, the KF estimator $\hat{x}_t$ minimizes the MSE w.r.t.~$X_t$.
% If $\omega_t,\nu_t$ are also normally-distributed, the KF is optimal among all models, not just linear ones.
\end{theorem}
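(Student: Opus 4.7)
The plan is to prove that, under Assumption 1, the KF estimator coincides with the conditional expectation $\hat{x}_t = \mathbb{E}[X_t \mid Z_1,\ldots,Z_t]$. Since the conditional expectation is the (universally) MSE-minimizing estimator among all measurable functions of the observations, this identification immediately yields the desired optimality. I will establish the identification by induction on $t$, leveraging the fact that under the Gaussian noise assumption all joint distributions of $(X_t, Z_{1:t})$ are Gaussian, so conditional expectations are linear and can be computed in closed form.

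For the base case, the assumption $\hat{x}_0 = \mathbb{E}[X_0]$ and $\hat{P}_0 = \mathrm{Cov}(X_0)$ matches the trivial case with no observations. For the inductive step I would split the KF iteration into its two substeps. In the \emph{prediction} step, assuming $\hat{x}_{t-1} = \mathbb{E}[X_{t-1}\mid Z_{1:t-1}]$ and using that $\omega_{t-1}$ is independent of $Z_{1:t-1}$ (by i.i.d.\ noise plus the recursive dependence structure of the model), linearity of conditional expectation gives $\mathbb{E}[X_t \mid Z_{1:t-1}] = F_{t-1}\hat{x}_{t-1}$, matching the KF prediction. The propagated covariance $\hat{P}_{t|t-1} = F_{t-1}\hat{P}_{t-1}F_{t-1}^\top + Q$ follows by the same independence together with $\hat{Q}=Q$. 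In the \emph{update} step, I would use the standard Gaussian-conditioning identity: for jointly Gaussian $(X_t, Z_t)$ given $Z_{1:t-1}$, the conditional mean given the additional observation $Z_t$ is
\begin{equation*}
\mathbb{E}[X_t\mid Z_{1:t}] = \mathbb{E}[X_t\mid Z_{1:t-1}] + \mathrm{Cov}(X_t,Z_t\mid Z_{1:t-1})\,\mathrm{Var}(Z_t\mid Z_{1:t-1})^{-1}\bigl(Z_t - \mathbb{E}[Z_t\mid Z_{1:t-1}]\bigr).
\end{equation*}
Plugging in $Z_t = H_t X_t + \nu_t$ with $\nu_t$ independent of $X_t, Z_{1:t-1}$, and using $\hat{H}_t=H_t$ and $\hat{R}=R$, this formula reduces algebraically to the Kalman gain update $\hat{x}_t = \hat{x}_{t|t-1} + K_t(Z_t - H_t \hat{x}_{t|t-1})$ with $K_t = \hat{P}_{t|t-1}H_t^\top(H_t\hat{P}_{t|t-1}H_t^\top + R)^{-1}$, and similarly for $\hat{P}_t$.

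The main obstacle is verifying the Gaussianity and independence structures used above carefully enough that the conditional-Gaussian identities apply at every step: one must check that $(X_t, Z_{1:t})$ is jointly Gaussian (by induction, as a linear combination of independent Gaussians $X_0, \{\omega_s\}, \{\nu_s\}$) and that $\nu_t$ is independent of $(X_t, Z_{1:t-1})$, which again follows from the i.i.d.\ and independence assumptions on the noise sequences plus the recursive construction. Once these are established, the algebraic reduction from the Gaussian-conditioning identity to the KF update is routine matrix algebra and can be cited from standard references such as \citet{jazwinski_book}. An alternative route, avoiding explicit Gaussian conditioning, is to invoke the orthogonality principle: show by induction that the innovation $Z_t - H_t\hat{x}_{t|t-1}$ is orthogonal to all past observations, so that the KF update is the orthogonal projection of $X_t$ onto the span of $Z_{1:t}$; under Gaussianity, this linear MMSE estimator coincides with the conditional expectation, giving the same conclusion.
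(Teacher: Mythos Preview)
Your proposal is a correct and complete sketch of the standard textbook proof of Kalman filter optimality. However, the paper does not actually prove this theorem: it is stated as a classical result with a citation to \citet{jazwinski_book,KF_fresh_look}, and no proof or proof sketch appears anywhere in the paper or its appendices. The theorem serves only as background motivation for why practitioners identify $\hat{Q},\hat{R}$ with the true noise covariances.

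Your argument---showing by induction that $\hat{x}_t = \mathbb{E}[X_t\mid Z_{1:t}]$ via Gaussian conditioning, and then invoking the universal MSE-optimality of the conditional expectation---is precisely the route taken in the cited references (in particular Jazwinski, Chapter~7). The alternative you mention via the innovations/orthogonality principle is also standard and is the approach emphasized in \citet{KF_fresh_look}. Either would be an acceptable reconstruction of what the paper is pointing to, so there is nothing to correct here; just be aware that you are supplying a proof the paper deliberately omits.
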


The KF accuracy strongly depends on its parameters $\hat{Q}$ and $\hat{R}$~\citep{noise_cov_estimation}.
As motivated by \cref{theorem:optimality}, these parameters are usually identified with the noise covariance $Q,R$ and are set accordingly:
\say{the systematic and preferable approach to determine the filter gain is to estimate the covariances from data} \citep{ALS}.
In absence of system state data $\{x_t\}$ (the ``ground truth''), many methods were suggested to estimate the covariances from observations $\{z_t\}$ alone \citep{Mehra,cov_estimation_varying_processes,measurement_noise_recommendation,seq_cov_estimation}.
We focus on the supervised setting, where the states $\{x_t\}$ are available in the training-data (but not in inference).
\begin{definition}[Supervised data]
\label{def:supervised_data}
Consider $K$ trajectories of a dynamic system, with lengths $\{T_k\}_{k=1}^K$.
We define their supervised data as the sequences of true system states $x_{k,t} \in \mathbb{R}^{d_x}$ and observations $z_{k,t} \in \mathbb{R}^{d_z}$: $\ \{\{(x_{k,t},z_{k,t}) \}_{t=1}^{T_k} \}_{k=1}^{K}$.
\end{definition}
If $F_t,H_t$ are known, the supervised setting permits a direct calculation of the sample covariance matrices of the noise~\citep{TutorialKF}:
\vspace{-3pt}
\begin{align}
\label{eq:noise_estimation}
\begin{split}
    % R&\coloneqq Cov(\{z_t-Hx_t\}_t) \\ Q&\coloneqq Cov(\{x_{t+1}-Fx_t\}_t)
    % \hat{Q}&\coloneqq Cov(\{x_{k,t+1}-F_t x_{k,t}\}_{k,t}) \\
    % \hat{R}&\coloneqq Cov(\{z_{k,t}-H_t x_{k,t}\}_{k,t}) .
    \hat{Q}&\coloneqq Cov(\{x_{k,t+1}-F_t x_{k,t}\}_{k,t}), \qquad
    \hat{R}\coloneqq Cov(\{z_{k,t}-H_t x_{k,t}\}_{k,t}) .
\end{split}
\end{align}
%
% where $\{\{\cdot\}_{t=1}^{T_k} \}_{k=1}^{K}$ is the trajectories concatenation, with total length of $\sum_k T_k$.
Since \cref{theorem:optimality} guarantees optimality when $\hat{Q}=Q,\hat{R}=R$, and \cref{eq:noise_estimation} provides a simple estimator for $Q$ and $R$,
% Due to the simplicity of \cref{eq:noise_estimation} and the optimality guarantee of \cref{theorem:optimality},
\cref{algo:KF} has become the gold-standard tuning method for KF from supervised data.
\begin{algo}[KF noise estimation]
    \label{algo:KF}
    Given supervised data $\{(x_{k,t},z_{k,t})\}$, return $\hat{Q}$ and $\hat{R}$ of \cref{eq:noise_estimation}.
\end{algo}
While \cref{algo:KF} is indeed trivial to apply in the supervised setting, we show below that when \cref{assumption:KF} is violated, it no longer provides optimal predictions.
% However, while the parameters \textit{can} be estimated by \cref{eq:noise_estimation},
Violation of \cref{assumption:KF} can be partially handled by certain variations of the KF, such as Extended KF (EKF)~\citep{KF_theory} and Unscented KF (UKF)~\citep{UKF}.

\section{Optimized Kalman Filter}
\label{sec:okf}

Estimation of the KF noise parameters $\hat{Q},\hat{R}$ has been studied extensively in various settings; yet, in our supervised setting it is trivially solved by \cref{algo:KF}.
However, once \cref{assumption:KF} is violated, such noise estimation is no longer a proxy to MSE optimization -- despite \cref{theorem:optimality}.
Instead, in this section we propose to determine $\hat{Q}$ and $\hat{R}$ via explicit MSE optimization.
We rely on standard optimization methods for sequential supervised learning; as discussed below, the main challenge is to maintain the Symmetric and Positive Definite (SPD) structure of $\hat{Q},\hat{R}$ as covariance matrices.
Formally, we consider the KF (\cref{fig:KF}) as a prediction model $\hat{x}_{k,t}(\{z_{k,\tau}\}_{\tau=1}^t;\,\hat{Q},\hat{R})$, which estimates ${x}_{k,t}$ given the observations $\{z_{k,\tau}\}_{\tau=1}^t$ and parameters $\hat{Q},\hat{R}$.
We define the KF optimization problem:
\begin{align}
\label{eq:obj}
\begin{split}
    \underset{Q',R'}{\mathrm{argmin}} &\sum_{k=1}^K \sum_{t=1}^{T_k} \mathrm{loss}\left( \hat{x}_{k,t}\left(\{z_{k,\tau}\}_{\tau=1}^t;\,Q',R'\right),\ x_{k,t} \right), \qquad
    \text{s.t. } Q'\in S_{++}^{d_x},\ R'\in S_{++}^{d_z} ,
\end{split}
\end{align}
where $S_{++}^{d} \subset \mathbb{R}^{d\times d}$ is the space of Symmetric and Positive Definite matrices (SPD), and $\mathrm{loss}(\cdot)$ is the objective function (e.g., $\mathrm{loss}(\hat{x},x)=||\hat{x}-x||^2$ for MSE).
Prediction of future states can be expressed using the same \cref{eq:obj}, by changing the observed input from $\{z_{k,\tau}\}_{\tau=1}^{t}$ to $\{z_{k,\tau}\}_{\tau=1}^{t-1}$.

A significant challenge in solving \cref{eq:obj} is the SPD constraint.
Standard numeric supervised optimization methods (e.g., Adam \citep{adam}) may violate the constraint.
% Parameters of sequential prediction models can be optimized via standard numeric optimization methods (e.g., Adam \citep{adam}).
% To optimize \cref{eq:obj}, we wish to apply a numeric optimization method (e.g., Adam \citep{adam}) to the sequential prediction KF model.
% However, such methods may violate the SPD constraint of \cref{eq:obj}.
% In other settings, 
% In other, non-supervised settings, when the KF parameters are optimized,
While the SPD constraint is often bypassed using diagonal restriction \mbox{\citep{unsupervised_KF_tuning,noise_cov_estimation}}, this may significantly degrade the predictions, as demonstrated in the ablation tests in \cref{sec:diagonal}.
% the SPD constraint is often bypassed using diagonal restriction \mbox{\citep{unsupervised_KF_tuning}}, as pointed by \citet{noise_cov_estimation}:
% \say{\textit{since both the covariance matrices must be constrained to be positive semi-definite, $Q$ and $R$ are often parameterized as diagonal matrices}}.
Instead, to maintain the complete expressiveness of $\hat{Q}$ and $\hat{R}$, we use the Cholesky parameterization \citep{cov_parameterization}.

\begin{wrapfigure}[20]{L}{0.53\textwidth} % I
\vspace{-12pt}
% \hspace{3pt}
\begin{minipage}{\linewidth}
\setcounter{algocf}{1}
\begin{algorithm}[H]
% \begin{algorithm2e}
\caption{Optimized Kalman Filter (OKF)}
\label{algo:OKF}
% \setstretch{1.1}
\DontPrintSemicolon
\SetAlgoNoLine
\SetNoFillComment

 {\bf Input}: training data $\{(x_{k,t},z_{k,t})\}_{k=1}^K$ (\cref{def:supervised_data}); batch size $b$; $\mathrm{loss}$ function (e.g., MSE); $\mathrm{optimization\_step}$ function (e.g., Adam)\;
 \BlankLine
 \BlankLine
 $d_x \leftarrow \mathrm{len}(x_{1,1}), \quad d_z \leftarrow \mathrm{len}(z_{1,1})$\;
 Initialize $\theta_Q \in\mathbb{R}^{\frac{1}{2}d_x(d_x+1)},\ \theta_R \in\mathbb{R}^{\frac{1}{2}d_z(d_z+1)}$\;
 \While{training not finished}{
  \tcp{Get $Q,R$ using \cref{eq:L}}
  $\hat{Q} \leftarrow L(\theta_Q)L(\theta_Q)^\top, \quad \hat{R} \leftarrow L(\theta_R)L(\theta_R)^\top$\;
  $\mathcal{K} \leftarrow \mathrm{sample}(\{1,...,K\},\,\text{size=}b)$\;
  $C \leftarrow 0$\;
  \For{$k$ in $\mathcal{K}$}{
   Initialize $\hat{x} \in\mathbb{R}^{d_x}$\;
   \For{$t$ in $1:T_k$}{
   \tcp{KF steps (\cref{fig:KF})}
   $\hat{x} \leftarrow \mathrm{KF\_predict}(\hat{x};\, \hat{Q})$ \; \label{line:pred}
   $\hat{x} \leftarrow \mathrm{KF\_update}(\hat{x}, z_{k,t};\, \hat{R})$ \; \label{line:update}
   $C \leftarrow C + \mathrm{loss}(\hat{x},\,x_{k,t})$ \; \label{line:loss}
  }
 }
 $\theta_Q,\theta_R \leftarrow \mathrm{optimization\_step}(C,\, (\theta_Q,\theta_R))$ \label{line:optim}\;
 }
 Return $\hat{Q},\, \hat{R}$\;
\end{algorithm}
% \vspace{-5pt}
\end{minipage}
\end{wrapfigure}

% ~\citep{cholesky}
The parameterization relies on Cholesky decomposition: any SPD matrix $A\in \mathbb{R}^{d\times d}$ can be written as $A=LL^\top$, where $L$ is lower-triangular with positive entries along its diagonal.
% The reversed claim is also true:
Reversely, for any lower-triangular $L$ with positive diagonal, $LL^\top$ is SPD.
Thus, to represent an SPD $A \in \mathbb{R}^{d\times d}$, we define $A(L) \coloneqq LL^\top$ and parameterize $L(\theta)$ to be lower-triangular, have positive diagonal, and be differentiable in the parameters $\theta$:
% to parameterize an SPD $A \in \mathbb{R}^{d\times d}$, we define $A(L) \coloneqq LL^\top$ and
%
\begin{equation}
\label{eq:L}
    \left(L(\theta)\right)_{ij}\coloneqq\begin{cases} 0 & \text{if } i<j, \\ e^{\theta_{d(d-1)/2+i}} & \text{if } i=j, \\ \theta_{(i-2)(i-1)/2+j} & \text{if } i>j, \end{cases}
\end{equation}
where $\theta\in\mathbb{R}^{d(d+1)/2}$.
% The parameterization $A(L(\theta)) = L(\theta)L(\theta)^\top$ is differentiable in $\theta$, and is SPD for any realization of $\theta \in \mathbb{R}^\frac{d(d+1)}{2}$.
% The exponent on the diagonal is intended to guarantee the uniqueness of the parameterization, as it is monotone and positive.

Both Cholesky parameterization and sequential optimization methods are well known tools.
Yet, for KF optimization \textit{from supervised data}, we are not aware of any previous attempts to apply them together, as noise estimation (\cref{algo:KF}) is typically preferred.
% Yet, to the best of our knowledge, we are the first to harness them together for optimization of KF from supervised data.

We wrap the optimization process in the Optimized KF (\textbf{OKF}) in \cref{algo:OKF}, which outputs optimized parameters $\hat{Q},\hat{R}$ for \cref{fig:KF}. % and publish it in \pypi{PyPI (link removed)}.
Note that \cref{algo:OKF} optimizes the state estimation at \textit{current} time $t$.
By switching \cref{line:update} and \cref{line:loss}, the optimization will instead be shifted to state prediction at the \textit{next} time-step (as $\hat{x}$ becomes oblivious to the current observation $z_{k,t}$). %, and \cref{algo:OKF} optimizes instead the \textit{predicted} state at the next time-step.

\textbf{Lack of theoretical guarantees:}
If \cref{assumption:KF} cannot be trusted, neither noise estimation (\cref{algo:KF}) nor OKF (\cref{algo:OKF}) can guarantee global optimality.
Still, OKF pursues the MSE objective using standard optimization tools, which achieved successful results in many non-convex problems \citep{adam_revisited} over millions of parameters~\citep{bert}.
On the other hand, noise estimation pursues a \textit{conflicting} objective, and guarantees significant sub-optimality in certain scenarios, as analyzed in \cref{sec:experiments}.

% Note that global optimality is guaranteed by neither noise estimation (\cref{algo:KF}) nor OKF (\cref{algo:OKF}) -- if \cref{assumption:KF} cannot be trusted.
% However, OKF at least addresses the desired objective -- while noise estimation suffers from goal-misalignment and significant sub-optimality, as analyzed below. %(as theoretically analyzed below), an explicit optimization at least addresses the desired objective.
% Furthermore, gradient-based optimization achieved remarkable results in many non-convex problems with local-minima~\citep{adam_revisited} and millions of parameters~\citep{bert}.

%%%%%%%%%%%%%%%%%%%%%%%%%%%%%%%%%%%%%%%%%%%%%%%%%
%%%%%%%%%%%%%%%%%%%%%%%%%%%%%%%%%%%%%%%%%%%%%%%%%

\section{OKF vs.~Neural KF: Is the Non-Linearity Helpful?}
\label{sec:nkf}

% \begin{wrapfigure}[11]{}{0.31\textwidth}
% % \begin{figure}[!t]
% \vspace{-52pt}
% \centering
% \includegraphics[width=\linewidth]{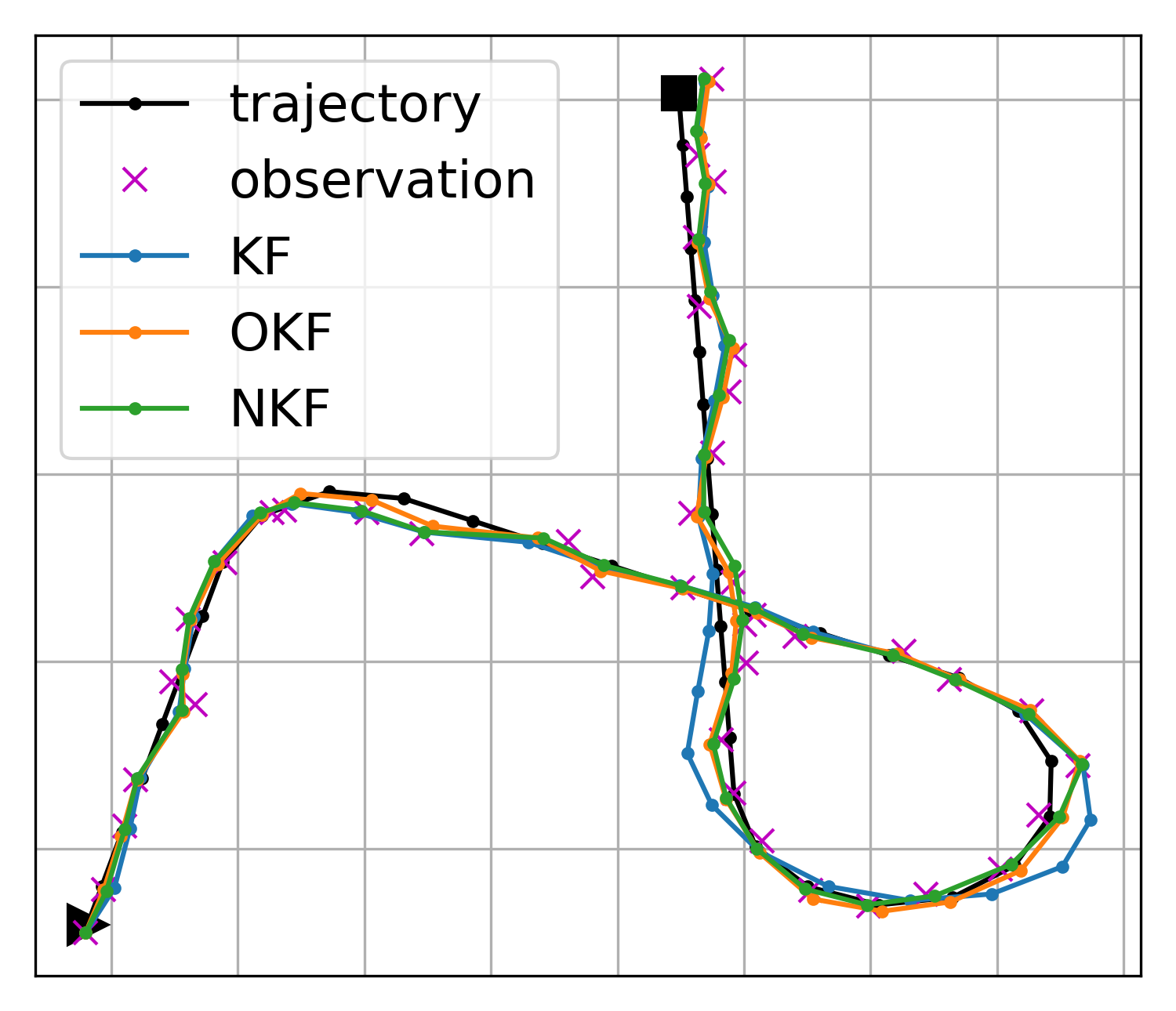}
% \caption{\small A sample trajectory and the corresponding predictions (projected onto XY plane).
% The standard KF provides inaccurate predictions in certain turns.}
% % \vspace{-25pt}
% \label{fig:res_sample}
% % \end{figure}
% \end{wrapfigure}

In this section, we demonstrate that comparing an optimized neural network to a non-optimized baseline may lead to incorrect conclusions: the network may seem superior even if the complicated architecture has no added value.
The implied message is not against neural networks, but rather that evaluating them against a non-optimized baseline carries a crucial flaw.

% In this section, we introduce a non-linear Neural KF model (NKF) and apply it to the classic Doppler radar problem \citep{modern_radar,doppler2}.
% The NKF achieves a significantly better accuracy than the linear KF (tuned by \cref{algo:KF}).
% However, once the same KF is optimized (using \cref{algo:OKF}), the advantage of NKF is completely eliminated.
% The key message of this section is that comparing ``apples and oranges'' -- an optimized neural network to a non-optimized baseline -- can \textit{actually} lead to incorrect conclusions and to adoption of unnecessarily complex architectures.
% % over-complicated

\begin{figure}[!t]
\begin{minipage}{0.32\textwidth}
% \vspace{-52pt}
\centering
\includegraphics[width=\linewidth]{figures/res_sample_colors.png}
\caption{\small A sample trajectory and the corresponding predictions (projected onto XY plane), in the Free-motion benchmark.
The standard KF provides inaccurate predictions in certain turns.}
% \vspace{-25pt}
\label{fig:res_sample}
\end{minipage}
\hspace{10pt}
\begin{minipage}{0.64\textwidth}
% \vspace{-16pt}
\centering
\begin{subfigure}{0.48\linewidth}
% \begin{subfigure}{0.29\linewidth}
  \centering
  \includegraphics[width=\linewidth]{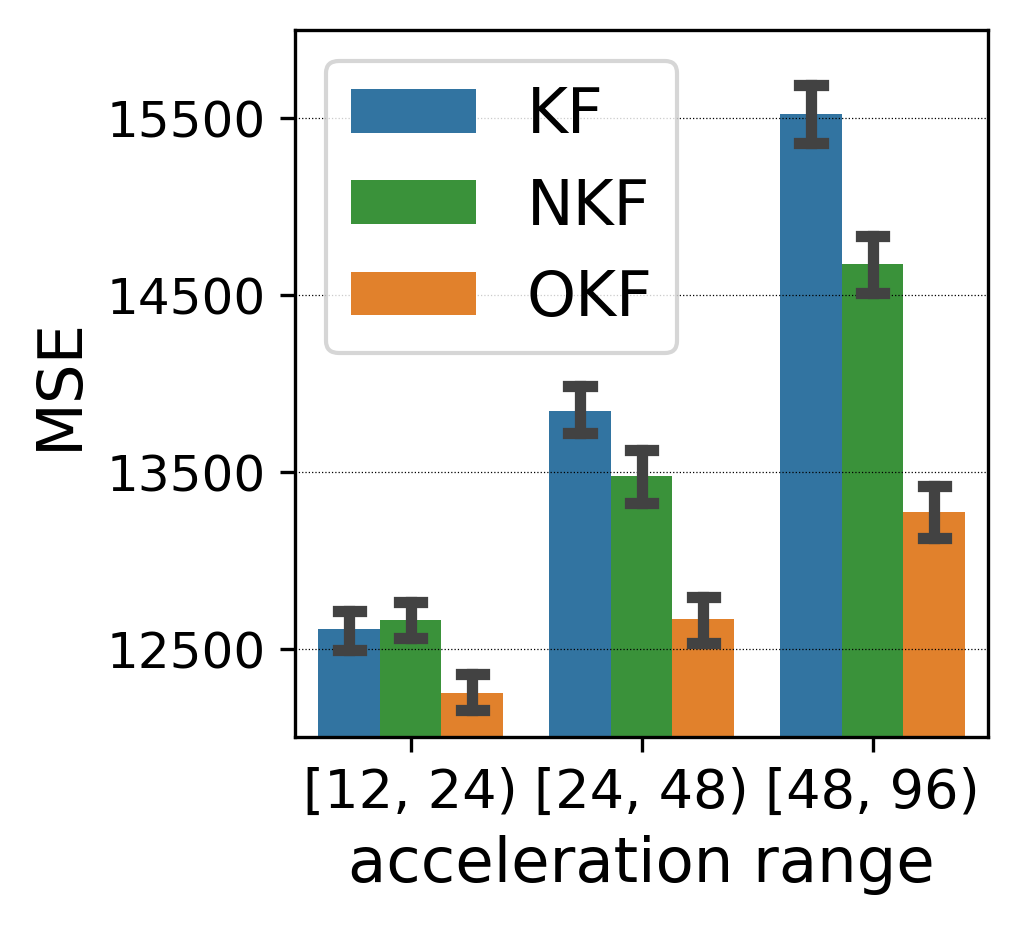}
  \caption{Free-motion benchmark}
  \label{fig:res_NKF_MSE}
\end{subfigure} %\hspace{2pt}
\begin{subfigure}{0.48\linewidth}
% \begin{subfigure}{0.29\linewidth}
  \centering
  \includegraphics[width=\linewidth]{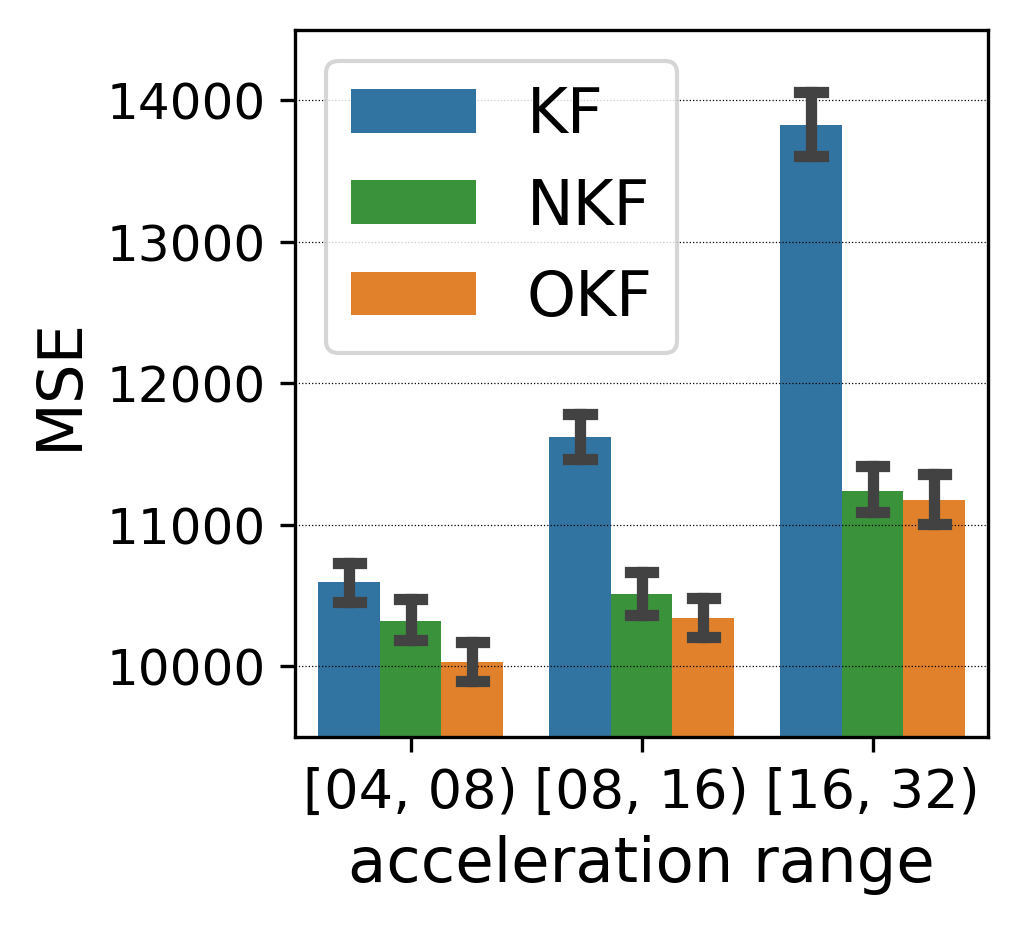}
  \caption{No-turns benchmark}
  \label{fig:res_NKF_acc_MSE}
\end{subfigure} %\hspace{5pt}
\caption{\small Test errors and 95\% confidence intervals, over targets with different accelerations.
The middle acceleration range coincides with the training accelerations (24-48 in (a) and 8-16 in (b)), and the other ranges correspond to out-of-distribution generalization.
% The error-bars correspond to 95\% confidence intervals.
}
\label{fig:res_NKF}
\end{minipage}
\end{figure}

\textbf{The Doppler radar problem:}
We consider a variant of the classic Doppler radar problem \citep{modern_radar,doppler2},
% In the Doppler radar problem,
where various targets trajectories are tracked in a homogeneous 3D space, given regular observations of a Doppler radar.
The state $X=(x_x,x_y,x_z,x_{ux},x_{uy},x_{uz})^\top\in\mathbb{R}^6$ consists of 3D location and velocity.
The goal is to minimize the MSE over the 3 location coordinates.
While the true dynamics $F$ are unknown to the KF, a constant-velocity model $\tilde{F}$ can be used:
\begin{equation}
\label{eq:doppler_F}
\tilde{F}=\left(\begin{smallmatrix} 1 &&& 1 && \\ & 1 &&& 1 & \\ && 1 &&& 1 \\&&& 1 &&\\&&&& 1 &\\&&&&& 1 \end{smallmatrix}\right) .
\end{equation}
%

% \begin{wrapfigure}[30]{}{0.31\textwidth}
% % \begin{figure}[!t]
% \vspace{-16pt}
% \centering
% \begin{subfigure}{\linewidth}
% % \begin{subfigure}{0.29\linewidth}
%   \centering
%   \includegraphics[width=\linewidth]{figures/NKF_MSE.png}
%   \caption{Free-motion benchmark}
%   \label{fig:res_NKF_MSE}
% \end{subfigure} \hspace{2pt}
% \begin{subfigure}{\linewidth}
% % \begin{subfigure}{0.29\linewidth}
%   \centering
%   \includegraphics[width=\linewidth]{figures/NKF_acc_MSE.png}
%   \caption{No-turns benchmark}
%   \label{fig:res_NKF_acc_MSE}
% \end{subfigure} \hspace{5pt}
% \caption{\small Test errors and 95\% confidence intervals, over targets with different accelerations.
% The middle acceleration range coincides with the training accelerations (24-48 in (a) and 8-16 in (b)), and the other ranges correspond to out-of-distribution generalization.
% % The error-bars correspond to 95\% confidence intervals.
% }
% % \vspace{-25pt}
% \label{fig:res_NKF}
% % \end{figure}
% \end{wrapfigure}

An observation $Z\in\mathbb{R}^4$ consists of the location in spherical coordinates (range, azimuth, elevation) and the radial velocity (the Doppler signal), with an additive i.i.d Gaussian noise. After transformation to Cartesian coordinates, the observation model can be written as:
\begin{equation}
\label{eq:doppler_H}
    H=H(X)=\left(\begin{smallmatrix} 1 \\ & 1 \\ && 1 \\ &&& \frac{x_x}{r} & \frac{x_y}{r} & \frac{x_z}{r} \end{smallmatrix}\right) ,
\end{equation}
where $r=\sqrt{x_x^2+x_y^2+x_z^2}$.
Since $H=H(X)$ relies on the unknown location $(x_x,x_y,x_z)$, we instead substitute $\tilde{H} \coloneqq H(Z)$ in the KF update step in \cref{fig:KF}.

\textbf{Neural KF:}
The Neural Kalman Filter (NKF) incorporates an LSTM model into the KF framework, as presented in \cref{app:nkf} and \cref{fig:NKF_diagram}.
% The demonstration relies on our experiments below with the Neural Kalman Filter model (NKF), which incorporates an LSTM model into the KF framework.
% , and is presented in \cref{app:nkf} and \cref{fig:NKF_diagram}.
We originally developed NKF to improve the prediction of the non-linear highly-maneuvering targets in the Doppler problem (e.g., \cref{fig:res_sample}), and made honest efforts to engineer a well-motivated architecture. % for the problem. %, as presented in \cref{app:nkf} and \cref{fig:NKF_diagram}.
Regardless, we stress that this section demonstrates a methodological flaw when comparing \textit{any} optimized filtering method to the KF; this methodological argument stands regardless of the technical quality of NKF.
In addition, \cref{app:nkf} presents similar results for other variants of NKF.

\textbf{Experiments:}
We train NKF and OKF on a dataset of simulated trajectories, representing realistic targets with free motion (as displayed in \cref{fig:res_sample}).
As a second benchmark, we also train on a dataset of simplified trajectories, with speed changes but with no turns.
The two benchmarks are specified in detail in \cref{app:okf_detailed}, and correspond to \cref{fig:trajs_consta} and \cref{fig:trajs_free}.
We tune the KF from the same datasets using \cref{algo:KF}.
In addition to out-of-sample test trajectories, we also test generalization to out-of-distribution trajectories, generated using different ranges of target accelerations (affecting both speed changes and turns radiuses).
% For testing in each benchmark, we use trajectories with an extended range of target accelerations (modifying both speed changes and turns radiuses).
% The test MSE is measured separately for these out-of-distribution accelerations.

% \cref{fig:res_NKF_MSE} summarizes the relative prediction errors in comparison to the standard KF baseline (tuned from the same data using \cref{algo:KF}).
\cref{fig:res_NKF} summarizes the test results.
Compared to KF, NKF reduces the errors in both benchmarks, suggesting that the non-linear architecture pays off.
However, optimization of the KF (using OKF) reduces the errors even further, and thus reverses the conclusion.
That is, the advantage of NKF in this problem comes \textit{exclusively} from optimization, and \textit{not at all} from the expressive architecture.
% came from optimization, and \textit{not at all} from the expressive architecture.
% Of course, neural networks may be beneficial in other scenarios; yet, in this example, without optimizing the KF, the over-complicated NKF architecture would be preferred unjustifiably.

% , another tracking benchmark
% \cref{app:nkf} extends the experiments to other variants of NKF and additional likelihood evaluation metric (NLL). % in addition to the MSE.

%%%%%%%%%%%%%%%%%%%%%%%%%%%%%%%%%%%%%%%%%%%%%%%%%
%%%%%%%%%%%%%%%%%%%%%%%%%%%%%%%%%%%%%%%%%%%%%%%%%

\section{OKF vs.~KF: } % : The Sub-Optimality of Noise Estimation}
\label{sec:experiments}

\cref{sec:nkf} presents the methodological contribution of OKF for non-linear filtering, as an optimized baseline for comparison, instead of the standard KF.
In this section, we study the advantage of OKF over the KF more generally.
We show that OKF consistently outperforms the KF in a variety of scenarios from 3 different domains.
This carries considerable practical significance: unlike neural models, shifting from KF to OKF merely requires change of the parameters $\hat{Q},\,\hat{R}$, hence can be deployed to real-world systems without additional overhead, complexity or latency on inference.

Recall that by \cref{theorem:optimality}, the KF is already optimal unless \cref{assumption:KF} is violated. Thus, the violations are discussed in depth, and the effects of certain violations are analyzed theoretically.

\subsection{Doppler Radar Tracking}
\label{sec:doppler}

\cref{theorem:optimality} guarantees the optimality of \cref{algo:KF} (KF).
Yet, in \cref{sec:nkf}, OKF outperforms the KF.
% using an identical model architecture -- just by determining the parameters differently using \cref{algo:OKF}.
This is made possible by the violation of \cref{assumption:KF}: while the Doppler problem of \cref{sec:nkf} may not seem complex, the trajectories follow a non-linear motion model (as displayed in \cref{fig:res_sample}).

% The significant sub-optimality of KF compared to OKF in \cref{sec:nkf} may come as a surprise, as the two rely on the same model architecture, and only determine the parameters differently.
% Recall that \cref{algo:KF} is the standard method for KF parameters tuning given supervised data \citep{TutorialKF,ALS}, and its optimality is guaranteed by \cref{theorem:optimality}.
% Yet, while the Doppler radar problem of \cref{sec:nkf} may not seem complex, the trajectories follow a non-linear motion model (as displayed in \cref{fig:res_sample}); hence, \cref{assumption:KF} is violated and no optimality is guaranteed.

\begin{wrapfigure}[20]{R}{0.5\textwidth} % I
\vspace{-10pt}
% \begin{figure}%[!t]
\centering
\includegraphics[width=0.8\linewidth]{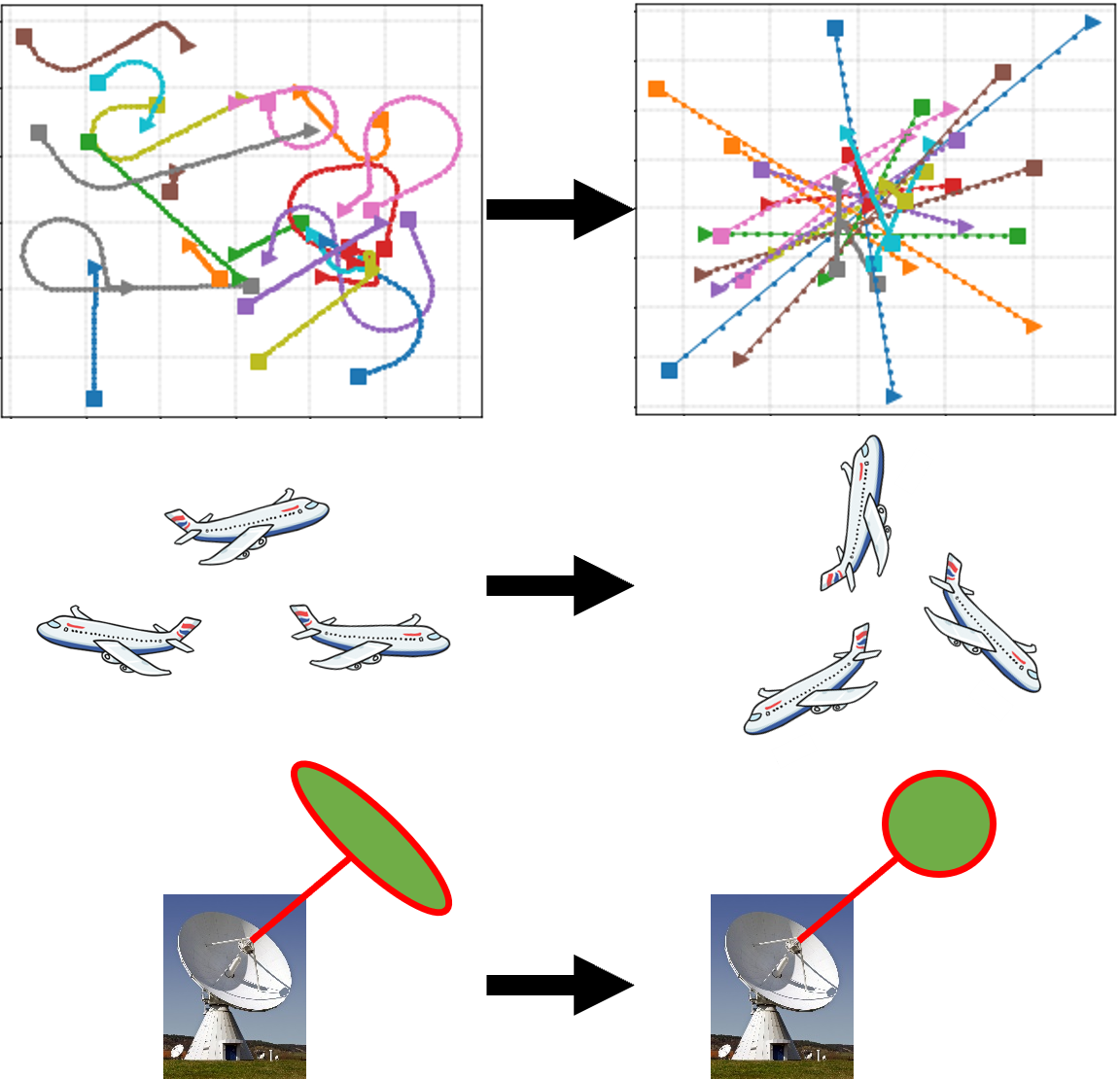}
\caption{\small The original Doppler problem (left) is simplified to a toy problem (right), with linear motion, isotropic flying directions and physically-impossible radar. After all the simplifications, \cref{assumption:KF} still does not hold, thus \cref{algo:KF} is still sub-optimal and outperformed by OKF.}
\label{fig:radar_simplification}
\end{wrapfigure}

% What would happen if we simplified the problem further?
% By simulating only constant-velocity targets, the motion model $F$ becomes linear.
% One may hope that \cref{assumption:KF} is now restored.
% However, a careful review of the assumption reveals further violations.
Imagine that we simplified the problem by only simulating constant-velocity targets, making the true motion model $F$ linear.
Would this recover \cref{assumption:KF} and make OKF unnecessary?
The answer is \textit{no}; the adventurous reader may attempt to list all the remaining violations before reading on.

The simulated targets move mostly horizontally, with limited elevation changes. This is not expressed by the KF's initial state distribution ($\hat{x}_0,\hat{P}_0$). To remedy this, one may simulate motion uniformly in all directions.
A third violation comes from the observation noise.
While the radar noise is i.i.d in spherical coordinates (as mentioned in \cref{sec:nkf}), it is not i.i.d in \textit{Cartesian} coordinates (see discussion in \cref{app:iid_violation}).
To overcome this, one may simulate a radar with (physically-impossible) Cartesian i.i.d noise.
This results in the unrealistically-simplified problem visualized in \cref{fig:radar_simplification}.

Despite the simplifications, it turns out that \cref{assumption:KF} is still not met, % Specifically,
as the observation model in \cref{eq:doppler_H} is still not linear (i.e., $H=H(X)$ is not constant).
As shown by \cref{prop:linear_violation}, this single violation alone results in a significant deviation of \cref{algo:KF} from the optimal parameters.

We first define the simplified problem.
\begin{problem}[The toy Doppler problem]
\label{problem:toy_doppler}
    The toy Doppler problem is the filtering problem modeled by \cref{eq:KF_model}, with constant-velocity dynamics $F$ (\cref{eq:doppler_F}), Doppler observation $H$ (\cref{eq:doppler_H}), and %$Q,R$ defined by
    $$ Q=\pmb{0}\in\mathbb{R}^{6\times6}, \qquad R=\left(\begin{smallmatrix} \sigma_x^2 \\ & \sigma_y^2 \\ && \sigma_z^2 \\ &&& \sigma_D^2 \end{smallmatrix}\right), $$
    where $\sigma_x,\sigma_y,\sigma_z,\sigma_D>0$.
\end{problem}
Recall that $H=H(X)$ in \cref{eq:doppler_H} depends on the state $X$, which is unknown to the model.
Thus, we assume that $\tilde{H}=H(\tilde{X})$ is used in the KF update step (\cref{fig:KF}), with some estimator $\tilde{X}\approx X$ (e.g., $\tilde{H}=H(Z)$ in \cref{sec:nkf}).
Hence, the \textit{effective} noise is $\tilde{R} \coloneqq Cov(Z-\tilde{H}X) \ne Cov(Z-HX) = R$.
\cref{prop:linear_violation} analyzes the difference between $\tilde{R}$ and $R$.
% the measurement noise ${R} = Cov(Z-{H}X)$ and the \textit{effective} noise $\tilde{R} \coloneqq Cov(Z-\tilde{H}X)$.
% To simplify \cref{prop:linear_violation},
To simplify the analysis, we further assume that the error $\tilde{X}-X$ within $\tilde{H}$ (e.g., $Z-X$) is independent of the target velocity.
\begin{proposition}
    \label{prop:linear_violation}
    In the toy Doppler \cref{problem:toy_doppler} with the estimated observation model $\tilde{H}$, the effective observation noise $\tilde{R} = Cov(Z-\tilde{H}X)$ is:
    % Then, the observation noise corresponding to $\tilde{H}$ (the \textit{effective} observation noise) is described by the following covariance matrix:
    \begin{equation}
    \label{eq:R}
        \tilde{R} = \left(\begin{smallmatrix} \sigma_x^2 \\ & \sigma_y^2 \\ && \sigma_z^2 \\ &&& \sigma_D^2 + C \end{smallmatrix}\right) = R + \left(\begin{smallmatrix} 0 \\ & 0 \\ && 0 \\ &&& C \end{smallmatrix}\right) ,
    \end{equation}
    % for some $C>0$.
    where $C = \Omega(\mathbb{E}[||u||^2])$ is the asymptotic lower bound (``big omega'') of the expected square velocity $u$. In particular, $C>0$ and is unbounded as the typical velocity grows.
\end{proposition}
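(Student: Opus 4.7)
The plan is to compute $Z-\tilde{H}X$ componentwise, use that $\tilde{H}$ and $H$ agree on their first three rows (the constant identity block projecting out the location), and isolate the extra variance incurred in the fourth (Doppler) row by using $\tilde{X}$ rather than $X$. The top three residuals are immediately $\nu_x,\nu_y,\nu_z$, so the upper-left $3\times 3$ block of $\tilde R$ equals that of $R$ and accounts for the $\sigma_x^2,\sigma_y^2,\sigma_z^2$ entries in \cref{eq:R}.

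For the Doppler row I would write $Z_D = \hat{x}\cdot u + \nu_D$ and $(\tilde H X)_D = \hat{\tilde x}\cdot u$, where $\hat{x}\coloneqq x/\|x\|$, $\hat{\tilde x}\coloneqq \tilde x/\|\tilde x\|$, and $u\in\mathbb{R}^3$ is the velocity block of $X$. The fourth residual is therefore $\nu_D+(\hat{x}-\hat{\tilde x})\cdot u$. Since $\nu_D$ is independent of the state and of the position error, its contribution splits off cleanly and gives $\sigma_D^2 + C$ with $C\coloneqq \mathrm{Var}\big((\hat x-\hat{\tilde x})\cdot u\big)$. The off-diagonal zeros of $\tilde R$ in \cref{eq:R} would follow from joint independence of the coordinates of $\nu$ together with the stated independence of $\tilde X-X$ from $u$, which makes the cross terms factor through $\mathbb{E}[u]$ and vanish (or be absorbed into the centered version of the statement).

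The $\Omega$-bound on $C$ is the step I expect to be the main obstacle. Using independence of the position error from $u$, I would rewrite $C=\mathrm{tr}\big(A\,\mathbb{E}[uu^\top]\big)$ with $A\coloneqq\mathbb{E}[(\hat x-\hat{\tilde x})(\hat x-\hat{\tilde x})^\top]$. In the directionally-isotropic setting already imposed by the simplifications in \cref{fig:radar_simplification} we have $\mathbb{E}[uu^\top]=\tfrac13\mathbb{E}[\|u\|^2]\,I$, so that $C=\tfrac13\,\mathrm{tr}(A)\,\mathbb{E}[\|u\|^2]$, which is $\Omega(\mathbb{E}[\|u\|^2])$ as soon as $\tilde X$ is not a perfect estimator of $X$ (so that $\mathrm{tr}(A)>0$). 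Without isotropy, the same conclusion requires only a mild lower bound on the smallest eigenvalue of $\mathbb{E}[uu^\top]$ relative to $\mathbb{E}[\|u\|^2]$. The algebraic decomposition in the earlier steps is routine; the substantive part of the proof is identifying and stating the minimal non-degeneracy condition that makes the $\Omega$-bound (not just an upper bound $C\le\|A\|_{\mathrm{op}}\,\mathbb{E}[\|u\|^2]$) actually hold.
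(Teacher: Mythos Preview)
Your proposal is correct and follows essentially the same route as the paper: decompose $\tilde H = H$ plus a perturbation confined to the Doppler row, obtain the effective residual $(\nu_x,\nu_y,\nu_z,\nu_D - dx'\cdot u)$ with $dx'=\hat{\tilde x}-\hat x$, kill the off-diagonals using independence of $u$ from $(dx',\nu)$ together with $\mathbb E[u]=0$, and set $C=\mathrm{Var}(dx'\cdot u)$. Your trace form $C=\mathrm{tr}\big(A\,\mathbb E[uu^\top]\big)$ and explicit invocation of isotropy for the $\Omega$-bound are in fact slightly more careful than the paper's appendix, which writes the loose equality $C=\mathbb E[\|dx'\|^2]\,\mathbb E[\|u\|^2]$ without the factor that the trace computation actually produces.
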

\begin{proof}[Proof sketch (see complete proof in \cref{app:linear_violation})]
We have $Cov(Z-\tilde{H}X) = Cov(Z-HX + (H-\tilde{H})X) = R + Cov((H-\tilde{H})X)$,
where the last equality relies on the independence between the target velocity and the estimation error $\tilde{X}-X$.
We then calculate $Cov((H-\tilde{H})X)$.
\end{proof}

\begin{wrapfigure}[18]{}{0.47\textwidth}
% \begin{figure}%[!h]
\vspace{-15pt}
\centering
\begin{subfigure}{.45\linewidth}
  \centering
  \includegraphics[width=1.\linewidth]{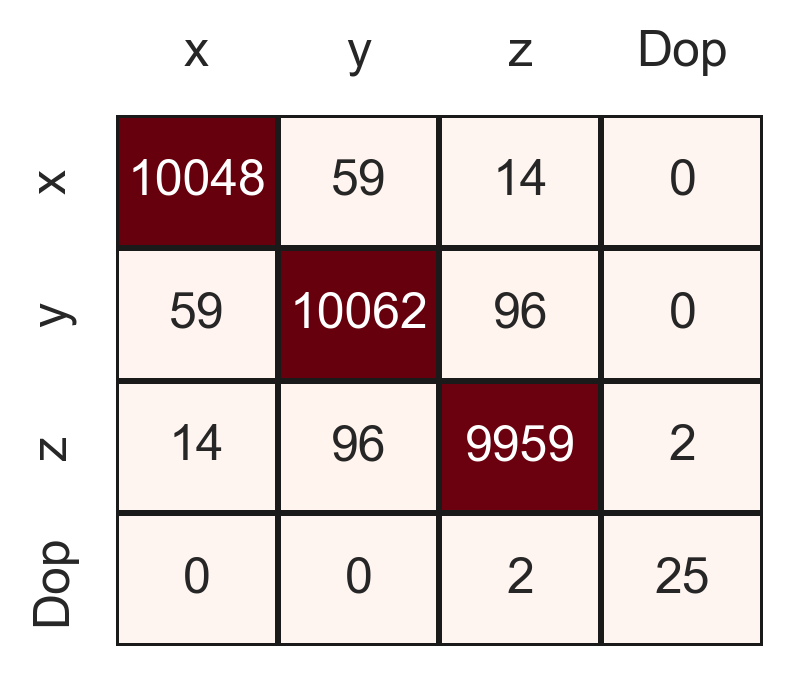}
  \caption{\small KF / $\hat{R}$}
  \label{fig:noise_R_KF_toy}
\end{subfigure}
\begin{subfigure}{.45\linewidth}
  \centering
  \includegraphics[width=1.\linewidth]{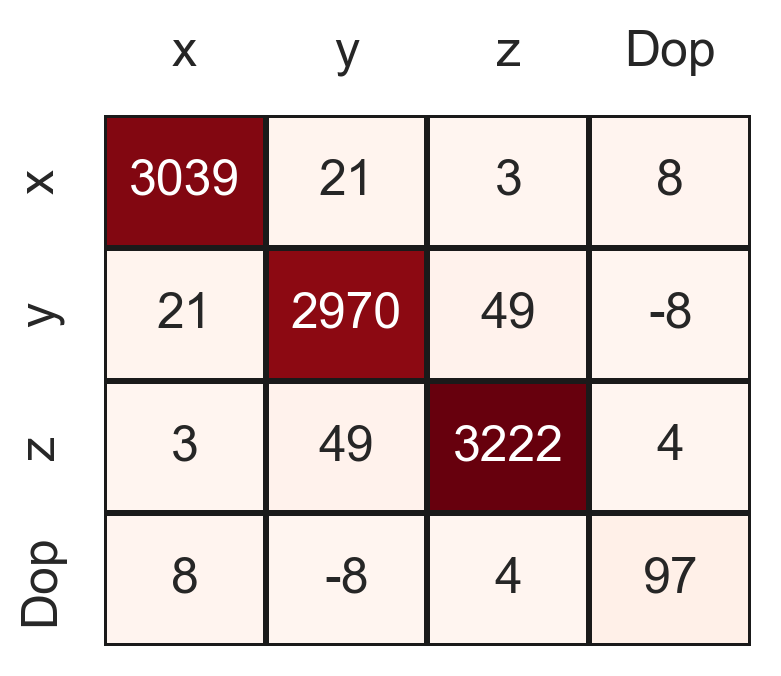}
  \caption{\small OKF / $\hat{R}$}
  \label{fig:noise_R_OKF_toy}
\end{subfigure}
\caption{\small The parameters $\hat{R}$ learned by KF and OKF in the toy Doppler problem.
The rows and columns' entries correspond to location ($x,y,z$) and radial velocity ($Doppler$).
The simulated noise variance is $100^2$ for the positional dimensions and $5^2$ for velocity, and is estimated accurately by the KF.
However, OKF increases the noise associated with velocity, in accordance with \cref{prop:linear_violation}.
The decrease in the positional variance comes from scale-invariance in the toy problem, as discussed in \cref{app:linear_violation}.
}
\label{fig:noise_params_toy}
% \end{figure}
\end{wrapfigure}

\cref{prop:linear_violation} has an intuitive interpretation: when measuring the velocity, \cref{algo:KF} only considers the inherent Doppler signal noise $\sigma_D$. However, the \textit{effective} noise $\sigma_D+C$ also includes the \textit{transformation error} from Doppler to the Cartesian coordinates, caused by the uncertainty in $H(X)$ itself.
Notice that heuristic solutions such as inflation of $R$ would not recover the effective noise $\tilde{R}$, which only differs from $R$ in one specific entry.
% While \cref{algo:KF} would estimate $R$ from supervised data, according to \cref{prop:linear_violation} the optimal noise parameter is $\tilde{R}$.

Yet, as demonstrated below, OKF captures the effective noise $\tilde{R}$ successfully.
Critically, it does so from mere data: OKF does not require the user to specify the model correctly, or to even be aware of the violation of \cref{assumption:KF}.

\textbf{Experiments:}
We test KF and OKF on the toy \cref{problem:toy_doppler} using the same methodology as in \cref{sec:nkf}.
In accordance with \cref{prop:linear_violation}, OKF adapts the Doppler noise parameter:
as shown in \cref{fig:noise_params_toy}, it increases $\sigma_D$ in proportion to the location noise by a factor of $\approx 13$.
Note that we refer to the proportion instead of absolute values due to scale-invariance in the toy problem, as discussed in \cref{app:linear_violation}.
% As shown in \cref{fig:noise_params_toy}, the ratio between the two is $\approx 13$ times larger in OKF in comparison to KF (we refer to the ratio and not to the absolute values due to scale-invariance of the toy problem, as discussed in \cref{app:linear_violation}).
Following the optimization, \textbf{OKF reduces the test MSE by 44\%} -- from 152 to 84.
% As displayed in \cref{fig:noise_params_toy}, and , OKF learns to increase the Doppler noise parameter in comparison to the location noise. %positional noise.

% In this toy problem, the optimal parameters could in fact be derived analytically from \cref{prop:linear_violation}.
% In practical problems, however, analytical solution is often infeasible.
% In fact, as discussed above, even specifying the model itself is not always trivial.
% Clearly, analytical solution of the wrong model would result in unaware sub-optimality.
% Instead, OKF optimizes the prediction errors directly from data, without any prior knowledge of the model.

\textbf{Extended experiments:}
% Sections \ref{sec:nkf} and \ref{sec:doppler}
% Sections \ref{sec:nkf} and \ref{sec:doppler}
This section and \cref{sec:nkf} test OKF against KF in three specific variants of the Doppler problem.
One may wonder if OKF's advantage generalizes to other scenarios, such as:
\begin{itemize}
    \vspace{-5pt}
    \item Different subsets of violations of \cref{assumption:KF}; 
    % \vspace{-5pt}
    \item Other baseline models than KF, e.g., Optimized Extended-KF;
    %\item An oracle KF baseline that uses the true noise covariances;
    % \vspace{-5pt}
    \item Small training datasets;
    % \vspace{-5pt}
    \item Generalization to out-of-distribution test data.
\end{itemize} \vspace{-5pt}

\begin{wrapfigure}[9]{}{0.32\textwidth}
% \begin{figure}[!b]
\vspace{-46pt}
\centering
\includegraphics[width=\linewidth]{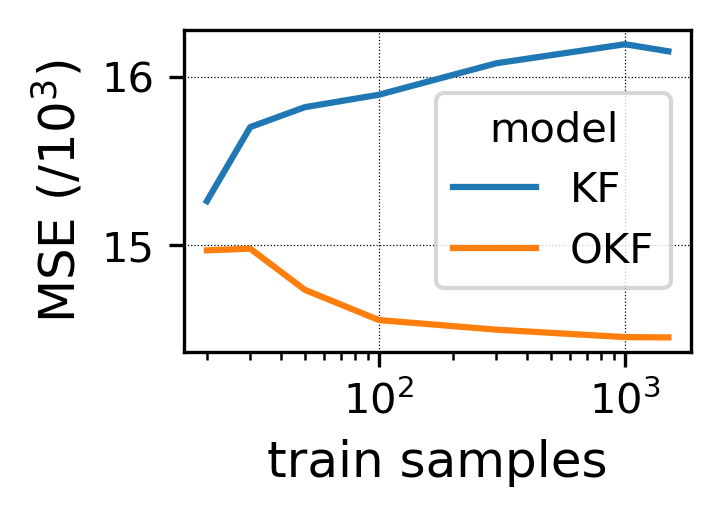}
\caption{\small Different train data sizes in the Doppler problem (\cref{fig:res_sample}). Due to objective misalignment, \cref{algo:KF} deteriorates with more train data.}
\label{fig:train_size_front}
% \end{figure}
\end{wrapfigure}

The extended experiments in \cref{app:okf} address \textit{all} of the concerns above by examining a wide range of problem variations in the Doppler radar domain.
In addition, other domains are experimented below.
\textbf{In all of these experiments, OKF outperforms \cref{algo:KF} in terms of MSE}.
% Interestingly, \cref{algo:KF} is also shown to sometimes deteriorate with the amount of train data.

Finally, the goal-misalignment of \cref{algo:KF} is demonstrated directly by two results:
even oracle noise estimation fails to optimize the MSE (\cref{app:okf_detailed}); and feeding more data to \cref{algo:KF} may \textit{degrade} the MSE (\cref{app:okf_train_size}). \cref{fig:train_size_front} presents a sample of the results of \cref{app:okf_train_size}.

%%%%%%%%%%%%%%%%%%%%%%%%%%%%%%%%%%%%%%%%%%%%%%%%%

\newpage
\subsection{Video Tracking}
\label{sec:video}

% \begin{figure}
% % \vspace{-37pt}
% \centering
% \includegraphics[width=.6\linewidth]{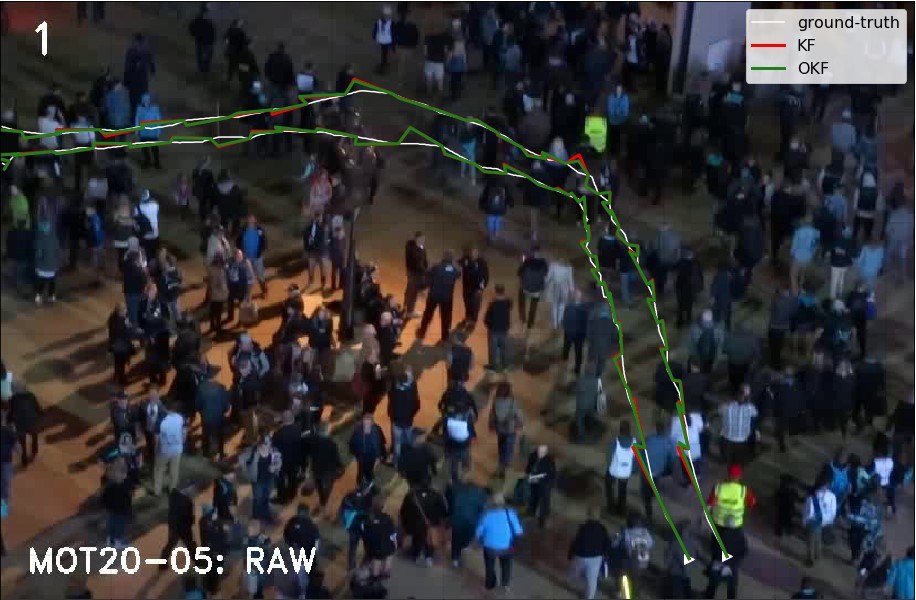}
% \caption{\small A sample of 2 trajectories in the first frame of MOT20 test video, along with the predictions of KF and OKF.}
% \label{fig:MOT20_sample}
% \end{figure}

\begin{figure}%[!b]
\begin{minipage}{\textwidth}
% \begin{wrapfigure}[14]{}{0.57\textwidth}
% \begin{figure}
% \vspace{-37pt}
\centering
\includegraphics[width=.6\linewidth]{figures/MOT20/MOT20_sample.jpg}
\caption{\small A sample of 2 trajectories in the first frame of MOT20 test video, along with the predictions of KF and OKF.}
\label{fig:MOT20_sample}
\end{minipage}
\begin{minipage}{\textwidth}
% \vspace{-16pt}
\centering
\begin{subfigure}{0.3\linewidth}
    \centering
    \includegraphics[width=\linewidth]{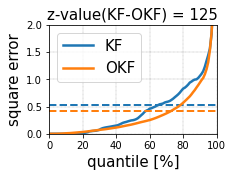}
    \caption{Video tracking}
    \label{fig:MOT20_res}
\end{subfigure}
\begin{subfigure}{0.3\linewidth}
    \centering
    \includegraphics[width=\linewidth]{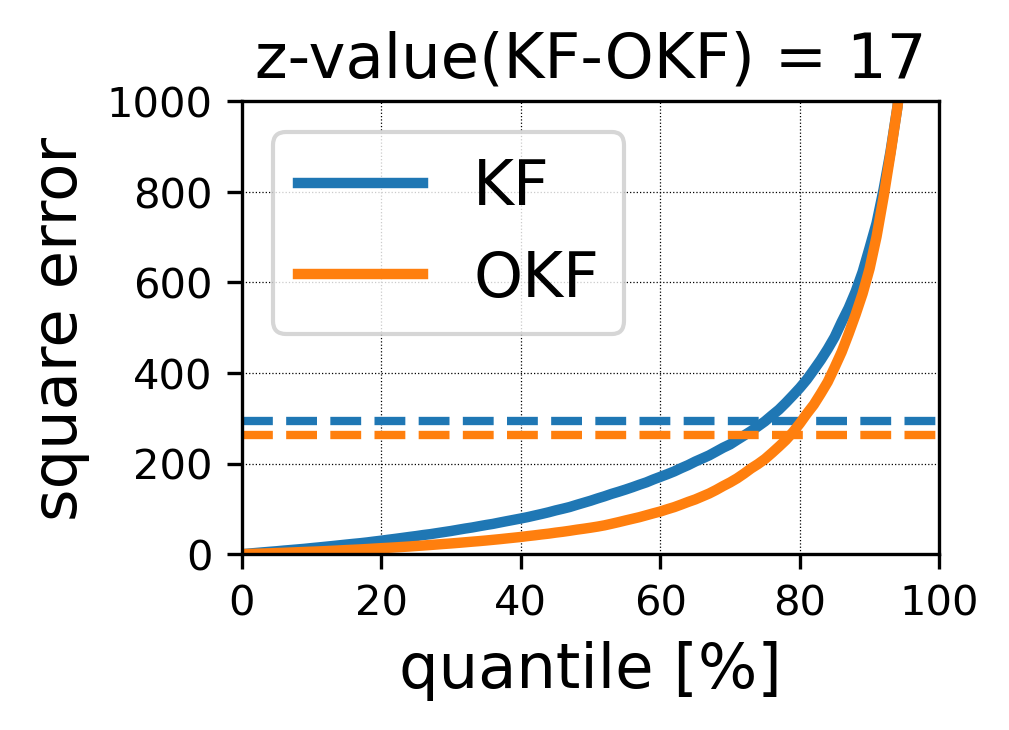}
    \caption{Lidar-based state estimation}
    \label{fig:lidar_MSE}
\end{subfigure}
\caption{\small Summary of the test errors in the video and lidar problems. The dashed lines correspond to MSE.
Both z-values correspond to $\text{p-value}<10^{-6}$.
Each z-value is calculated over $N$ test trajectories as follows: $z = \frac{\mathrm{mean}(\{\Delta_i\})}{\mathrm{std}(\{\Delta_i\})} \sqrt{N}$, where $\Delta_i = err_i(KF)^2 - err_i(OKF)^2$ is the square-error difference on trajectory $1\le i\le N$.}
\label{fig:non_doppler_res}
\end{minipage}
\end{figure}

The MOT20 dataset~\citep{MOT20} contains videos of real-world targets (mostly pedestrians, as shown in \cref{fig:MOT20_sample}), along with their true location and size in every frame.
For our experimental setup, since object detection is out of the scope, we assume that the true locations are known in real-time.
The objective is to predict of the target location in the next frame.
The state space corresponds to the 2D location, size and velocity, and the observations include only the location and size.
The underlying dynamics $F$ of the pedestrians are naturally unknown, and the standard constant-velocity model is used for $\tilde{F}$.
This results in the following model:
$$ \tilde{F}=\left(\begin{smallmatrix} 1&&&&1& \\ &1&&&&1 \\ &&1&&& \\ &&&1&& \\ &&&&1& \\ &&&&&1 \end{smallmatrix}\right), \quad \tilde{H}=H=\left(\begin{smallmatrix} 1&&&&0&0 \\ &1&&&0&0 \\ &&1&&0&0 \\ &&&1&0&0 \end{smallmatrix}\right) . $$
Notice that the known observation model $\tilde{H}=H$ is \textit{linear} ($H$ is independent of $X$), hence poses a substantial difference from \cref{sec:doppler} in terms of violations of \cref{assumption:KF}.

The first three videos with 1117 trajectories are used for training, and the last video with 1208 trajectories for testing.
% We use different videos for training and testing.
As shown in \cref{fig:MOT20_res}, OKF reduces the test MSE by 18\% with high statistical significance.
% The significance level is indicated by the z-value, calculated over the test error differences $\{\Delta err_i\}_{i=1}^N$ between KF and OKF: $z = \sqrt{N}\frac{mean(\Delta err)}{std(\Delta err)}$.
% We have $z = 125$, which corresponds to $\text{p-value}<10^{-6}$.

% % \begin{wrapfigure}[17]{}{0.42\textwidth} % 20, I
% \begin{figure}
% %   \vspace{-10pt}
%   \begin{center}
%   \includegraphics[width=0.7\linewidth]{figures/MOT20/MOT20_SE.png}
%   \end{center}
% \caption{\small Prediction errors of KF and OKF on 1208 targets in the test video of MOT20. The MSE of OKF is lower by 18\%.}
% \label{fig:MOT20_res}
% \end{figure}
% % \end{wrapfigure}

%%%%%%%%%%%%%%%%%%%%%%%%%%%%%%%%%%%%%%%%%%%%%%%%%

\subsection{Lidar-based State Estimation in Self Driving}
\label{sec:lidar}

Consider the problem of state-estimation in self-driving, based on lidar measurements with respect to known landmarks~\citep{lidar_from_beacons}.
The objective is to estimate the current vehicle location.
We assume a single landmark (since the landmark matching problem is out of scope).
% The targets are generated with random trajectories as demonstrated in \cref{fig:lidar_sample}, and described in \cref{app:lidar}.
We simulate driving trajectories consisting of multiple segments, with different accelerations and turn radiuses (see \cref{fig:lidar_trajectories} in the appendix).
% The dynamics are simulated as follows: each target trajectory consists of several intervals, in each one the target has certain (positive or negative) acceleration and certain lateral acceleration ("turn magnitude"), both drawn randomly at the beginning of the interval.
The state is the vehicle's 2D location and velocity, and $\tilde{F}$ is modeled according to constant-velocity.
The observation (both true $H$ and modeled $\tilde{H}$) corresponds to the location, with an additive Gaussian i.i.d noise in polar coordinates.
This results in the following model:
$$ \tilde{F}=\left(\begin{smallmatrix} 1&0&1&0 \\ 0&1&0&1 \\ 0&0&1&0 \\ 0&0&0&1 \end{smallmatrix}\right), \quad \tilde{H}=H=\left(\begin{smallmatrix} 1&0&0&0 \\ 0&1&0&0 \end{smallmatrix}\right) . $$
We train KF and OKF over 1400 trajectories and test them on 600 trajectories.
As shown in \cref{fig:lidar_MSE}, OKF reduces the test MSE by 10\% with high statistical significance. %($\text{z-value} = 17$).

Notice that the lidar problem differs from \cref{sec:doppler} in the linear observation model $H$, and from \cref{sec:video} in the additive noise.
Both properties have a major impact on the problem, as analyzed in \cref{prop:linear_violation} and below, respectively.

% In addition, Section~\ref{sec:theory_noniid} provides theoretical analysis for the optimal parameters under the coordinates mismatch between the state and the source of the noise.

\textbf{Theoretical analysis:}
As mentioned in \cref{sec:doppler} and discussed in \cref{app:iid_violation}, the i.i.d noise in polar coordinates is not i.i.d in Cartesian ones.
To isolate the i.i.d violation and study its effect, we define a simplified toy model -- with simplified states, no-motion model $F$, isotropic motion noise $Q$ and only radial observation noise.
Note that in contrast to \cref{sec:doppler}, the observation model is already linear.
% We seize the opportunity to isolate the i.i.d violation and study its effect.
% First, we define a simplified toy model -- with simplified states, no-motion model $F$, isotropic motion noise $Q$ and only radial observation noise.
% % without velocity in the state, with a no-motion model $F$, with isotropic motion noise $Q$, and with only radial observation noise.
%
\begin{problem}[The toy lidar problem]
\label{problem:toy_lidar}
    The toy lidar problem is the filtering problem modeled by \cref{eq:KF_model} with the following parameters:
    $$F=H=\begin{pmatrix} 1 & 0 \\ 0 & 1 \end{pmatrix}, \  Q=\begin{pmatrix} q & 0 \\ 0 & q \end{pmatrix}, \  R_{polar}=\begin{pmatrix} r_0 & 0 \\ 0 & 0 \end{pmatrix} , $$
    for some unknown $q,r_0>0$, with observation noise drawn i.i.d from $\mathcal{N}(0,R_{polar})$ in \textit{polar} coordinates.
    The initial state $X_0$ follows a radial distribution (i.e., with a PDF of the form $f(||x_0||)$).
\end{problem}
\begin{proposition}
\label{prop:iid_violation}
As the number $N$ of train trajectories in \cref{problem:toy_lidar} grows, the noise parameter $\hat{R}_N(KF)$ estimated by \cref{algo:KF} converges almost surely:
$$ \hat{R}_N(KF) \xrightarrow{\text{a.s.}} \hat{R}_{est} = \begin{pmatrix} r_0/2 & 0 \\ 0 & r_0/2 \end{pmatrix} . $$
On the other hand, under regularity assumptions, the MSE is minimized by the parameter $\hat{R}_{opt} = \left(\begin{smallmatrix} r & 0 \\ 0 & r \end{smallmatrix}\right)$, where $r<{r_0}/{2}$.
\end{proposition}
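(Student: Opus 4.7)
The plan is to prove the two claims independently, exploiting the rotational symmetry of the toy model throughout.

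For the convergence claim, I would first characterize the Cartesian observation residual. Since the polar noise has zero angular component, the residual is purely radial: $\nu := Z - HX = \epsilon \cdot v(X)$, where $\epsilon \sim \mathcal{N}(0, r_0)$ is independent of $X$ and $v(x) := x/\|x\|$ is the unit radial direction. Because $X_0$ has a radial PDF and both $F = I$ and $Q = qI$ are rotationally invariant, each $X_t$ remains radially distributed, so its angle $\Theta_t$ is uniform on $[0, 2\pi)$. Using independence of $\epsilon$ from $X$:
\[
  \mathbb{E}[\nu\nu^\top]
  = r_0 \cdot \mathbb{E}[v(X) v(X)^\top]
  = r_0 \cdot \tfrac{1}{2\pi} \int_0^{2\pi} \begin{pmatrix}\cos^2\theta & \sin\theta\cos\theta\\ \sin\theta\cos\theta & \sin^2\theta\end{pmatrix} d\theta
  = \tfrac{r_0}{2}\, I.
\]
Algorithm~1 returns the sample covariance of $\{z_{k,t} - H x_{k,t}\}$, which is a sample mean of i.i.d.\ realizations of $\nu\nu^\top$, so the strong law of large numbers delivers $\hat{R}_N(KF) \xrightarrow{\mathrm{a.s.}} \tfrac{r_0}{2}\, I$.

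For the MSE-optimal parameter, I would first collapse the search space by symmetry. The problem data is invariant under simultaneous rotations of state and observation, so the MSE objective inherits this invariance; any minimizer $\hat{R}_{opt}$ must therefore commute with all rotations and hence take the form $rI$. With $\hat{Q} = qI$ and $\hat{R} = rI$, the KF converges to a steady state with isotropic Kalman gain $K = k(r) I$, where $k(r)$ is the unique solution of the discrete algebraic Riccati equation. Writing the true per-coordinate MSE as $m(r)$, the one-step error recursion $e_{t}^{\mathrm{new}} = (I-K)(e_{t-1}^{\mathrm{new}} - \omega_{t-1}) + K \nu_t$ yields a Lyapunov-type fixed-point equation in $m$ whose solution expresses $m$ as an explicit function of $k$ (and hence of $r$).

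The hard part of the proof is the strict inequality $r < r_0/2$. Although the marginal covariance of $\nu$ is $\tfrac{r_0}{2} I$, the noise is conditionally rank-one: $\mathrm{Cov}(\nu \mid X) = r_0\, v(X) v(X)^\top$. This extra structure couples the KF's isotropic updates across time in a way not captured by the isotropic Gaussian surrogate with covariance $(r_0/2) I$. I would attempt to expose this coupling by expanding $m(r)$ around $r = r_0/2$ and tracking the joint behavior of $(e_t^{\mathrm{new}}, v(X_t))$ beyond their marginal second moments, aiming to show $m'(r_0/2) > 0$ under the stated regularity assumptions. The cleanest route I can envisage is to perturb $r$ downward from $r_0/2$ and verify that the resulting larger Kalman gain exploits the rank-one direction of $\nu$ more efficiently than the isotropic prediction $\mathrm{Cov}(\nu) = (r_0/2) I$ suggests, thereby reducing the true MSE and forcing the optimum to lie strictly below $r_0/2$.
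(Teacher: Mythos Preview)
Your treatment of the first claim (almost-sure convergence of $\hat{R}_N$) is correct and essentially identical to the paper's argument.

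For the second claim, your route diverges from the paper's and leaves the key inequality unproved. You set up a steady-state analysis via the DARE and propose to show $m'(r_0/2) > 0$ by a perturbative argument, but you explicitly flag this as the ``hard part'' and offer only an intuition about the rank-one structure of $\nu$ without carrying out any computation. Making that argument work would require controlling the joint evolution of $(e_t, v(X_t))$, since the true noise $\nu_t$ is correlated with $e_{t-1}$ through the state trajectory; you note this obstacle but do not resolve it, so as written the proposal does not establish $r < r_0/2$.

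The paper avoids the steady-state machinery entirely by analyzing a \emph{single} update step. Assume at some time $t$ the state has prior $X_t \sim \mathcal{N}(x_0, pI)$ (this is the ``regularity assumption''), and let $z = x_0 + dx$ be the observation. The KF estimate using $\hat{R} = rI$ is $\hat{x} = x_0 + a\,dx$ with $a = p/(p+r)$, while the \emph{true} posterior mean $x_{true}$ is computed explicitly from the correct conditional covariance $R(\theta)$ of \cref{eq:R_theta}. The MSE decomposes into a variance term independent of $r$ and a bias term $E\|\hat{x} - x_{true}\|^2$; expanding the latter, averaging over $\theta$ and $dx$, and minimizing in $a$ gives $a^\ast = \tfrac{p + r_0/2}{p + r_0}$, hence
\[
  r \;=\; p/a^\ast - p \;=\; \frac{p\, r_0}{2p + r_0},
\]
so that $r_0/2 - r = \tfrac{r_0^2/2}{2p + r_0} > 0$ is immediate. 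This one-step calculation is both simpler than your steady-state route and yields an explicit formula rather than a mere sign.
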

\begin{proof}[Proof sketch (see complete proof in \cref{app:iid_violation})]
For $\hat{R}_{est}$, we calculate $\mathbb{E}[\hat{R}_N(KF)]$ and use the law of large numbers.
For the calculation, we transform $R_{polar}$ to Cartesian coordinates using the random direction variable $\theta$, and take the expectation over $\theta\sim U([0,2\pi))$. The uniform distribution of $\theta$ comes from the radial symmetry of the problem.
For $\hat{R}_{opt}$, we calculate and minimize the expected square error directly.
\end{proof}

\begin{figure}[!b]
% \begin{wrapfigure}[6]{R}{0.3\textwidth}
% \begin{figure}%[!h]
% \vspace{-62pt}
\centering
\begin{subfigure}{.16\linewidth}
  \centering
  \includegraphics[width=1.\linewidth]{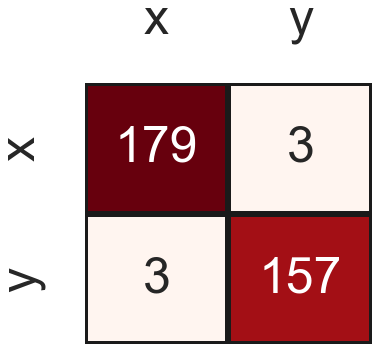}
  \caption{\small KF / $\hat{R}$}
  \label{fig:lidar_noise_R_KF}
\end{subfigure}
\begin{subfigure}{.16\linewidth}
  \centering
  \includegraphics[width=1.\linewidth]{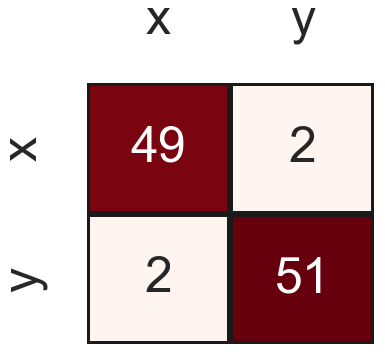}
  \caption{\small OKF / $\hat{R}$}
  \label{fig:lidar_noise_R_OKF}
\end{subfigure}
\caption{\small The parameters $\hat{R}$ learned in the lidar problem. From data alone, OKF learns to decrease the noise parameters, consistently with \cref{prop:iid_violation}.}
\label{fig:lidar_noise}
% \end{wrapfigure}
\end{figure}

Intuitively, \cref{prop:iid_violation} shows that the i.i.d violation reduces the \textit{effective} noise.
Note that the analysis only holds for the unrealistic toy \cref{problem:toy_lidar}.
% in the toy \cref{problem:toy_lidar}, the optimal parameters $R$ are lower than the ones obtained by noise estimation.
The empirical setting in this section is less simplistic, and generalizing \cref{prop:iid_violation} is not trivial.
Fortunately, OKF optimizes directly from the data, and does not require such theoretical analysis.
\cref{fig:lidar_noise} shows that indeed, in accordance with the intuition of \cref{prop:iid_violation}, OKF learns to reduce the values of $\hat{R}$ in comparison to KF.
This results in reduced test errors as specified above.

%%%%%%%%%%%%%%%%%%%%%%%%%%%%%%%%%%%%%%%%%%%%%%%%%
%%%%%%%%%%%%%%%%%%%%%%%%%%%%%%%%%%%%%%%%%%%%%%%%%

\section{Related Work}
\label{sec:related_work}

\textbf{Noise estimation}
% Estimation of the KF noise parameters has been studied for decades.
% When tuning a KF, ground-truth states data is often unavailable~\citep{noise_cov_estimation}.
of the KF parameters from observations alone has been studied for decades, as supervised data (\cref{def:supervised_data}) is often unavailable.
Various methods were studied, based on autocorrelation~\citep{Mehra,Carew}, EM~\citep{EM_for_noise_estimation} and others~\citep{ALS,seq_cov_estimation,measurement_noise_recommendation}.
When supervised data \textit{is} available, noise estimation reduces to \cref{eq:noise_estimation} and is considered a solved problem~\citep{ALS}.
We show that while noise estimation is indeed easy from supervised data, it may be the wrong objective to pursue.
% In this work we assume that the ground-truth is available, where the current standard approach is to simply estimate the noise covariance matrix from the data.
%, which is often possible in simulations and controlled experiments, such as the concrete Doppler radar use-case that motivated this work in the first place.

% Many works addressed the problem of non-stationary noise estimation~\citep{cov_estimation_varying_processes,adaptive_noise_covariance}.
% However, as demonstrated in \cref{sec:nkf}, stationary methods may be highly competitive if tuned correctly -- even in problems with complicated dynamics.

\textbf{Optimization:}
We apply gradient-based optimization to the KF with respect to its errors.
In absence of supervised data, gradient-based optimization was suggested for other losses, such as smoothness \citep{BoydOptimization}.
In the supervised setting, noise estimation is typically preferred \citep{ALS}, although optimization without gradients was suggested in \citet{Thrun_trainingKF}.
% Optimization without gradients was already suggested in \citet{Thrun_trainingKF}.
% Optimization of KF with respect to its errors was already suggested in \citet{Thrun_trainingKF}, using a method that \textit{avoids} gradients computation.
In practice, ``optimization'' of KF is sometimes handled by trial and error \citep{learning_in_indoor_navigation} or grid search \citep{noise_cov_estimation,pose_estimation}.
In other cases, $Q$ and $R$ are restricted to be diagonal \citep{unsupervised_KF_tuning,noise_cov_estimation}.
However, such heuristics may not suffice when the optimal parameters take a non-trivial form. %(such as their form in \cref{prop:linear_violation}).
% However, the optimal parameters may take a non-trivial form (as demonstrated in \cref{prop:linear_violation}), which would be hard to find via such heuristics.
% As demonstrated in \cref{prop:linear_violation}, when \cref{assumption:KF} is violated, the optimal parameters may deviate from the noise parameters in a non-trivial way, and not just by a scalar multiplication; this might be missed by naive methods such as trial-and-error.

% Not central point, and already discussed once.
% Gradient-based optimization of SPD matrices in general was suggested in \citet{matrix_exponentiated_gradient_updates} using matrix-exponents, and is also possible using projected gradient-descent~\citep{projected_GD} -- both rely on SVD-decomposition.
% In this work, we apply gradient-based optimization using the parameterization that was suggested in \citet{cov_parameterization}, which requires a mere matrix multiplication, and thus is both efficient and easy to implement.

% In absence of a trajectories dataset, a recent line of works~\citep{tsiamis2020online, goel2021regret} minimizes a regret metric, using online optimization from the observations of the current trajectory.

\textbf{Neural Networks (NNs) in filtering:}
The NKF in \cref{sec:nkf} relies on a recurrent NN.
NNs are widely used in non-linear filtering, e.g., for online prediction \citep{ManeuveringTargetTracking2,KF_vs_LSTM,pose_estimation,navigation_using_RNN,NESDE}, near-online prediction \citep{vision_tracking_with_bilinear_LSTM}, and offline prediction \citep{DeepMTT}.
Learning visual features for tracking via a NN was suggested by \citet{deep_sort}.
% In addition, while \citet{sort} apply a KF for video tracking from mere object detections, \citet{deep_sort} add to the same system a NN that generates visual features as well.
NNs were also considered for related problems such as data association \citep{DeepDA}, model switching \citep{IMM_with_RNN_transitions}, and sensors fusion \citep{sensor_fusion}.

In addition, all the 10 studies cited in \cref{sec:intro} used a NN model for non-linear filtering, with either KF or EKF as a baseline for comparison.
As discussed above, none has optimized the baseline model to a similar extent as the NN. As demonstrated in \cref{sec:nkf}, such methodology could lead to unjustified conclusions. %, resulting in ``apples and oranges'' comparison.

% Repeating.
% Many works that consider NNs for filtering problems, use a KF as a baseline for comparison.
% However, while the NN parameters are typically optimized with respect to the prediction errors, the KF parameters tuning is sometimes ignored \citep{ManeuveringTargetTracking2,bai2020neuron,RNN_EKF}, sometimes based on estimation (or knowledge) of the noise \citep{navigation_using_RNN,ANN_vs_EKF,KalmanNet}, and sometimes optimized heuristically as mentioned above \citep{learning_in_indoor_navigation,pose_estimation,KF_with_ANN}.
% Our findings imply that this methodology is wrong, since the baseline is not optimized to the same level as the learning model.
% \citet{KF_vs_NN_for_batteries} explicitly discusses the sensitivity of EKF to the noise model accuracy, and suggests the solution of a NN with supervised learning -- without ever considering the same supervised learning for the EKF itself.

%%%%%%%%%%%%%%%%%%%%%%%%%%%%%%%%%%%%%%%%%%%%%%%%%
%%%%%%%%%%%%%%%%%%%%%%%%%%%%%%%%%%%%%%%%%%%%%%%%%

\section{Summary}
\label{sec:summary}

We observed that violation of the KF assumptions is common, and is potentially difficult to notice or model.
Under such violation, we analyzed (theoretically and empirically) that the standard noise estimation of the KF parameters conflicts with MSE optimization.
An immediate consequence is that the KF is often used sub-optimally.
A second consequence is that in many works in the literature, where a neural network is compared to the KF, the experiments become inconclusive: they cannot decide whether the network succeeded due to superior architecture, or merely because its parameters were optimized.
We presented the Optimized KF (OKF), and demonstrated that it can solve both issues (\cref{sec:experiments} and \cref{sec:nkf}, respectively).

From a practical point of view, the OKF is available on \pypi{PyPI} and is easily applicable to new problems. Since its architecture is identical to the KF, and only the parameters are changed, the learned model causes neither inference-time delays nor deployment overhead.
All these properties make the OKF a powerful practical tool for both linear and non-linear filtering problems.

\subsubsection*{Acknowledgements}
The authors thank Tal Malinovich, Ophir Nabati, Zahar Chikishev, Mark Kozdoba, Eli Meirom, Elad Sharony, Itai Shufaro and Shirli Di-Castro for their helpful advice.
This work was partially funded by the European Union's Horizon Europe Programme, under grant number 101070568.

% \subsubsection*{Reproducibility}
% All the experiments in this work are reproducible using our
% \href{https://anonymous.4open.science/r/OKF_anonymous_tmp-18C0}{\underline{code}}, including data generation, models training and results analysis.
% The complete proofs for the theoretical results are available in Appendices \ref{sec:toy_analysis} and \ref{sec:theory_noniid}.

\newpage
\bibliographystyle{plainnat}
% \nocite{*}
\bibliography{main}

\newpage
\appendix

%%%%%%%%%%%%%%%%%%%%%%%%%%%%%%%%%%%%%%%%%%%%%%%%%
%%%%%%%%%%%%%%%%%%%%%%%%%%%%%%%%%%%%%%%%%%%%%%%%%

% \section*{Table of Contents}
\setcounter{tocdepth}{1}
\tableofcontents
\newpage

%%%%%%%%%%%%%%%%%%%%%%%%%%%%%%%%%%%%%%%%%%%%%%%%%
%%%%%%%%%%%%%%%%%%%%%%%%%%%%%%%%%%%%%%%%%%%%%%%%%
% \section{Detailed Problem Setups}
% \label{app:setups}
%%%%%%%%%%%%%%%%%%%%%%%%%%%%%%%%%%%%%%%%%%%%%%%%%
% \subsection{Doppler Radar Tracking}
% \label{app:doppler}
%%%%%%%%%%%%%%%%%%%%%%%%%%%%%%%%%%%%%%%%%%%%%%%%%
% \subsection{Video Tracking}
% \label{app:video}
% For the train data we use the videos MOT20-01,MOT20-02,MOT20-03, and for the test data MOT20-05. In particular, our test data comes from an entirely different video than the train data, hence the testing is less prone to overfit.
%%%%%%%%%%%%%%%%%%%%%%%%%%%%%%%%%%%%%%%%%%%%%%%%%
% \subsection{Lidar-based State Estimation in Self Driving}
% \label{app:lidar}
% The dynamics are simulated as follows: each target trajectory consists of several intervals, in each one the target has certain (positive or negative) acceleration and certain lateral acceleration ("turn magnitude"), both drawn randomly at the beginning of the interval.

%%%%%%%%%%%%%%%%%%%%%%%%%%%%%%%%%%%%%%%%%%%%%%%%%
%%%%%%%%%%%%%%%%%%%%%%%%%%%%%%%%%%%%%%%%%%%%%%%%%

\FloatBarrier
\section{Theoretical Analysis}
\label{app:theory}

%%%%%%%%%%%%%%%%%%%%%%%%%%%%%%%%%%%%%%%%%%%%%%%%%

\subsection{Non-linear Observation}
\label{app:linear_violation}

In this section, we discuss the relation between the theoretical analysis of \cref{prop:linear_violation} and the empirical results shown in \cref{fig:noise_params_toy}.
Then, we provide the proof of \cref{prop:linear_violation}.
% and discuss the corresponding empirical results shown in \cref{fig:noise_params_toy}.

\textbf{\cref{fig:noise_params_toy} vs.~\cref{prop:linear_violation}:}
\cref{fig:noise_params_toy} displays the noise parameters $\hat{R}$ learned by OKF in the toy problem.
In accordance with \cref{prop:linear_violation}, the noise $\sigma_D$ associated with Doppler is increased compared to the true measurement noise $R$. %in comparison to the positional noise.
In fact, not only $\sigma_D$ is increased, but also the positional variances are decreased, which is not explained by \cref{prop:linear_violation}.
This phenomenon origins in the absence of dynamics noise in this toy problem ($Q\equiv0$), which leads to scale-invariance w.r.t.~the absolute values of $\hat{R}$.
That is, if we multiply the whole matrix $\hat{R}$ by a constant factor, the filtering errors are unaffected.
Specifically, if we multiply $\hat{R}$ of \cref{fig:noise_R_OKF_toy} by a factor of $\approx 3$, the positional variances become aligned with those of \cref{fig:noise_R_KF_toy}, and $\sigma_D$ is increased by a factor of $\approx 13$ -- in accordance with \cref{prop:linear_violation}.
We repeated the tests with this modified $\hat{R}$, and indeed, the results were indistinguishable from the original OKF.

\begin{proof}[Proof of \cref{prop:linear_violation}]

Recall that in this problem, the KF applies the update step using an estimated observation model $\tilde{H} = H(\tilde{X})$:
$$ \tilde{H} = \begin{pmatrix} 1 \\ & 1 \\ && 1 \\ &&& \tilde{x}_x/\tilde{r} & \tilde{x}_y/\tilde{r} & \tilde{x}_z/\tilde{r} \end{pmatrix} . $$
Denoting the normalized estimation error $dx'=\frac{\tilde{x}}{\tilde{r}}-\frac{x}{r}$, we can rewrite $\tilde{H}$ as
\begin{equation*}
    \tilde{H} = H + 
    \left(\begin{smallmatrix}
    0 \\ & 0 \\ && 0 \\ &&& dx_x' & dx_y' & dx_z'
    \end{smallmatrix}\right) .
\end{equation*}
% where $dx'=\tilde{x}/\tilde{r}-x/r$ is the corresponding normalized estimation error. %Note that in the private case $\tilde{x}=z$, these errors coincide with the observation noise $\nu$ (up to normalization).
By shifting the observation model in \cref{eq:KF_model} from $H$ to $\tilde{H}$, and denoting the noise by $\nu=(\nu_x,\nu_y,\nu_z,\nu_D)^\top$, we receive
\begin{align*}
    Z=&H X +\nu = \tilde{H}X + \left(\begin{smallmatrix} \nu_x \\ \nu_y \\ \nu_z \\ \nu_D -dx_x'u_x-dx_y'u_y-dx_z'u_z \end{smallmatrix}\right) = \tilde{H}X + \left(\begin{smallmatrix} \nu_x \\ \nu_y \\ \nu_z \\ \nu_D - dx'\cdot u \end{smallmatrix}\right) ,
\end{align*}
where $u$ denotes the current target velocity.
We see that the effective observation noise is $\tilde{\nu}=Z-\tilde{H}X = (\nu_x, \nu_y, \nu_z, \nu_D - dx'\cdot u)^\top$.

To show that all the off-diagonal entries of $\tilde{R}=Cov(\tilde{\nu})$ vanish, recall that the estimation error $dx'$ is assumed to be independent of the velocity $u$.
According to \cref{eq:KF_model}, $\nu$ is also independent of $u$.
% $dx_x',\nu_x$ are both independent of the velocity $u_x$ (the former by the assumption, and the latter by the model of Eq.~\ref{eq:KF_model} in Definition~\ref{problem:toy_doppler}).
Hence, $Cov(dx_x'\cdot u_x,\ \nu_x)=E(dx_x'\cdot u_x \cdot \nu_x)=E(dx_x'\nu_x)E(u_x)$ which vanishes by symmetry ($E(u_x)=0$). The same result holds for coordinates $y,z$.
Thus, $\tilde{R}$ is diagonal.
Finally, by denoting $C=Var(dx'\cdot u)>0$ we have $Cov(\tilde{\nu}) = \tilde{R}$ as required.

Relying again on symmetry $E(u),E(dx')=0$, we can further calculate $C = Var(dx'\cdot u) = E(||dx'||^2)E(||u||^2) = \Omega(E(||u||^2))$, where $\Omega$ (``big-omega'') corresponds to an asymptotic lower bound.
\end{proof}

%%%%%%%%%%%%%%%%%%%%%%%%%%%%%%%%%%%%%%%%%%%%%%%%%

\subsection{Non-i.i.d Noise}
\label{app:iid_violation}

The assumption of i.i.d noise in \cref{assumption:KF} is violated in many practical scenarios.
Certain models with non-i.i.d noise can be solved analytically, if modeled correctly.
For example, if the noise is auto-regressive with a known order $p$, an adjusted KF model may consider the last $p$ values of the noise itself as part of the system state~\citep{KF_colored_noise}.
However, the actual noise model is often unknown or infeasible to solve analytically.

Furthermore, the violation of the i.i.d assumption may even go unnoticed.
We discuss a potential example in \cref{sec:experiments}, where the noise is i.i.d in \textit{spherical} coordinates -- but is not so after the transformation to \textit{Cartesian} coordinates.
To see that, consider a radar with noiseless angular estimation (i.e., only radial noise), and a low target ($x_z \approx 0$).
Clearly, most of the noise concentrates on the XY plane -- both in the current time-step and in the following ones (until the target moves away from the plane). Hence, the noise is statistically-dependent over time-steps.

We may formalize this intuition for the toy \cref{problem:toy_lidar}.
Denote the system state at time $t$ by $X_t=((X_t)_1,(X_t)_2)^\top$, and denote $\tan\theta_t=\frac{(X_t)_2}{(X_t)_1}$.
By transforming $R_{polar}$ of \cref{problem:toy_lidar} to Cartesian coordinates, the observation noise is drawn from the distribution $\nu_t\sim \mathcal{N}(0,R(\theta_t))$, where
\begin{equation}
\label{eq:R_theta}
    R(\theta)=\begin{pmatrix} r_0\cos^2(\theta) & r_0\cos(\theta)\sin(\theta) \\ r_0\cos(\theta)\sin(\theta) & r_0\sin^2(\theta) \end{pmatrix} .
\end{equation}
Since consecutive time steps are likely to have similar values of $\theta_t$, the noise $\nu_t$ is no longer independent across time steps.

The effect of this violation of the i.i.d assumption is analyzed in \cref{prop:iid_violation}, whose proof is provided below.
% explains the effect of the violation of the i.i.d assumption on the optimal parameters in the toy lidar \cref{problem:toy_lidar}.
% Below we provide its proof.
%
\begin{proof}[Proof of \cref{prop:iid_violation}]
\phantom{ }

\textbf{Noise estimation:}
First, notice that the whole setting of \cref{problem:toy_lidar} is invariant to the target direction $\theta$:
the initial state distribution is radial, and the motion noise $Q$ is isotropic.
Hence, for any target at any time-step, $\theta_t\sim\left[0,2\pi\right)$ is uniformly distributed.
By direct calculation,
\begin{align*}
    E_\theta\left[\hat{R}_N(KF)_{11}\right] &= E_\theta\left[r_0 \cos^2\theta\right] = \int_0^{2\pi} \frac{r_0}{2\pi} \cos^2\theta d\theta = \frac{r_0}{2} \\
    E_\theta\left[\hat{R}_N(KF)_{22}\right] &= E_\theta\left[r_0 \sin^2\theta\right] = \int_0^{2\pi} \frac{r_0}{2\pi} \sin^2\theta d\theta = \frac{r_0}{2} \\
    E_\theta\left[\hat{R}_N(KF)_{12}\right] &= E_\theta\left[\hat{R}_N(KF)_{21}\right] = E_\theta\left[r_0 \cos\theta\sin\theta\right] = 0 . % \\ &= \int_0^{2\pi} \frac{r_0}{2\pi} \cos\theta\sin\theta d\theta = 0 \\
\end{align*}
Since the targets in the data are i.i.d, the noise estimation of \cref{algo:KF} converges almost surely according to the law of large numbers, as required:
$$ \hat{R}_N(KF) \xrightarrow{\text{a.s.}} \hat{R}_{est} = \begin{pmatrix} r_0/2 & 0 \\ 0 & r_0/2 \end{pmatrix} . $$

\textbf{Optimization:}
We use again the radial symmetry and invariance to rotations in the problem:
w.l.o.g, we assume that the optimal noise covariance parameter is diagonal, i.e., $\hat{R}_{opt}(r) = \left(\begin{smallmatrix} r & 0 \\ 0 & r \end{smallmatrix}\right)$ for some $r>0$. Our goal is to find $r$, and in particular to compare it to $r_0/2$.

At a certain time $t$, where the system state is $X_t$, denote $E[X_t] = x_0 = (x_1,x_2)^\top$ and $Cov(X_t) = P_0 = \left(\begin{smallmatrix} p & 0 \\ 0 & p \end{smallmatrix}\right)$ (where $p>0$).
Denote the observation received at time $t$ by $z=(x_1+dx_1,x_2+dx_2)^\top$.
We are interested in the point-estimate $\hat{x}$ of the KF following the update step (\cref{fig:KF}).
By substituting $x_0$, $P_0$, the observation $z$ and the noise parameter $\hat{R}_{opt}(r)$ in the update step, we have
\begin{align*}
    \hat{x} &= x_0 + P_0H^\top(HP_0H^\top+\hat{R}_{opt}(r))^{-1}(z-Hx_0) %\\
    = x_0 + P_0(P_0+\hat{R}_{opt}(r))^{-1}(z-x_0) \\
    &= x_0 + \begin{pmatrix}
        \frac{p}{p+r} & 0 \\
        0 & \frac{p}{p+r}
    \end{pmatrix} \begin{pmatrix}
        dx_1 \\ dx_2
    \end{pmatrix} % \\
    = \begin{pmatrix}
        x_1+\frac{p}{p+r}dx_1 \\ x_2+\frac{p}{p+r}dx_2
    \end{pmatrix} .
\end{align*}
On the other hand, the \textit{true} observation noise covariance at time $t$ is $R(\theta_t)$ of \cref{eq:R_theta} (for the random variable $\theta_t$).
If we add the assumption that the state $X_t$ is normally distributed ($X_t \sim \mathcal{N}(x_0,P_0)$), and use the true noise covariance $R(\theta_t)$, then the update step of \cref{fig:KF} gives us the true posterior expected state:
\begin{align*}
    x_{true} &= x_0 + P_0(P_0+R(\theta_t))^{-1}(z-x_0) \\
    &= x_0 + \begin{pmatrix}
        \frac{r_0\sin^2\theta+p}{p+r_0} & -\frac{r_0\cos\theta\sin\theta}{p+r_0} \\
        -\frac{r_0\cos\theta\sin\theta}{p+r_0} & \frac{r_0\cos^2\theta+p}{p+r_0}
    \end{pmatrix} \begin{pmatrix}
        dx_1 \\ dx_2
    \end{pmatrix} \\ &= \begin{pmatrix}
        x_1 + \frac{(r_0\sin^2\theta+p)dx_1 - (r_0\cos\theta\sin\theta)dx_2}{p+r_0} \\
        x_2 + \frac{(r_0\cos^2\theta+p)dx_2 - (r_0\cos\theta\sin\theta)dx_1}{p+r_0}
    \end{pmatrix} .
\end{align*}

We can use the standard MSE decomposition for the point-estimate $x$, into the bias term of $x$ and the variance term of the state distribution: $MSE = MSE_{var}(P_{true}) + MSE_{bias}(x, x_{true})$.
Notice that $MSE_{var}(P_{true})$ is independent of our estimator, as it corresponds to the inherent uncertainty $P_{true}$ (defined by applying to $P_0$ the update step with the true covariance $R(\theta_t)$).
Thus, our objective is to minimize $MSE_{bias}(\hat{x}, x_{true}) = E[||\hat{x}-x_{true}||^2]$.
% The expected error of the point-estimate $x$ can be decomposed into two terms: the bias of $x$, and the variance of the system-state distribution. More formally, we can write $MSE = MSE_{var}(P_{true}) + MSE_{bias}(x, x_{true})$, where we can only control the latter.

For the calculation below, we denote $a(r) \coloneqq p/(p+r)$ and use the identity $\sin2\theta=2\cos\theta\sin\theta$.
In addition, from radial symmetry of $dx = z - x_0$ we have $E[dx_1^2]=E[dx_2^2]$ and $E[dx_i]=0$, thus we can denote $v \coloneqq Var(dx_i) = E[dx_i^2]$.
% according to $F$ in Definition~\ref{def:simplified_lidar}, $E\left[dx_1^2\right]=E\left[dx_2^2\right]=q$
%
\begin{align*}
    MSE_{bias}&(\hat{x}(a), x_{true}) = E||\hat{x}(a)-x_{true}||^2 \\
    =& E\left[
    \left((a-\frac{r_0\sin^2\theta+p}{p+r_0})dx_1 + \frac{r_0\sin(2\theta)/2}{p+r_0}dx_2\right)^2\right. \\
    &+ \left.\left((a-\frac{r_0\cos^2\theta+p}{p+r_0})dx_2 + \frac{r_0\sin(2\theta)/2}{p+r_0}dx_1\right)^2\right] \\
    %\right] \\
    =& E\left[
    dx_1^2\left( a^2 - 2a\frac{r_0\sin^2\theta+p}{p+r_0} + C_1 \right) \right. %\\
    + \frac{r_0^2\sin^2(2\theta)/4}{(p+r_0)^2}dx_2^2 + A_1dx_1dx_2 \\
    &+ dx_2^2\left( a^2 - 2a\frac{r_0\cos^2\theta+p}{p+r_0} + C_2 \right) %\\
    + \left. \frac{r_0^2\sin^2(2\theta)/4}{(p+r_0)^2}dx_1^2 + A_2dx_1dx_2
    \right] \\
    =& 2va^2 - 2va\frac{r_0+2p}{p+r_0} + v(C_1+C_2) + v\frac{r_0^2\sin^2(2\theta)/2}{(p+r_0)^2} ,
\end{align*}
where $C_{1,2}$ are independent of $a$, and $A_{1,2}$ are multiplied by $E[dx_1dx_2]=0$ and vanish.
To minimize we calculate
\begin{align*}
    0 &= \frac{\partial MSE_{bias}(\hat{x}(a), x_{true})}{\partial a} = 4v\cdot a - 2v\frac{2p+r_0}{p+r_0} ,
\end{align*}
which gives us
\begin{align*}
    a = \frac{p+r_0/2}{p+r_0} .
\end{align*}
Notice that $MSE_{bias}$ clearly diverges as $|a|\rightarrow \infty$, hence the only critical point necessarily corresponds to a minimum of the $MSE$.
Hence, the optimal $MSE$ is given when substituting the following $r$ in $\hat{R}_{opt}$:
\begin{align*}
    r &= p/a-p = \frac{p^2+pr_0 - (p^2+pr_0/2)}{p+r_0/2} = \frac{pr_0}{2p+r_0} .
\end{align*}
Finally, recall that $(\hat{R}_{est})_{ii}=r_0/2$ and compare to $r$ directly:
% \begin{align*}
%     (R_{est})_{ii} - (R_{opt})_{ii} &= r_0/2-r \\ &= \frac{(pr_0 + r_0^2/2) - pr_0}{2p+r_0} = \frac{r_0^2/2}{2p+r_0} > 0
% \end{align*}
\begin{align*}
    (\hat{R}_{est})_{ii} - (\hat{R}_{opt})_{ii} &= r_0/2-r = \frac{r_0^2/2}{2p+r_0} > 0 .
\end{align*}
\end{proof}

%%%%%%%%%%%%%%%%%%%%%%%%%%%%%%%%%%%%%%%%%%%%%%%%%
%%%%%%%%%%%%%%%%%%%%%%%%%%%%%%%%%%%%%%%%%%%%%%%%%

\FloatBarrier
\section{OKF: Extended Experiments}
\label{app:okf}

\subsection{Additional Scenarios and Baselines: A Case Study}
\label{app:okf_detailed}

In this section, we extend the experiments of \cref{sec:doppler} with a detailed case study.
The case study considers 5 types of tracking scenarios (\textit{benchmarks}) and 4 variants of the KF (\textit{baselines}) -- 20 experiments in total.
In each experiment, we compare the test MSE of OKF against the standard KF.
The experiments in \cref{sec:nkf} and \cref{sec:doppler} are 3 particular cases.
For each benchmark, we simulate 1500 targets for training and 1000 targets for testing.

\textbf{Benchmarks (scenarios):}
\cref{sec:experiments} discusses the sensitivity of \cref{algo:KF} to violations of \cref{assumption:KF}.
In this case study, we consider 5 benchmarks with different subsets of violations of \cref{assumption:KF}.
The \textit{Free Motion} benchmark is intended to represent a realistic Doppler radar problem, with targets and observations simulated as in \cref{sec:nkf}: each target trajectory consists of multiple segments of different turns and accelerations.
On the other extreme, the \textit{Toy} benchmark (\cref{problem:toy_doppler}) introduces multiple simplifications (as visualized in \cref{fig:radar_simplification}).
In the Toy benchmark, the only violation of \cref{assumption:KF} is the non-linear observation $H$, as discussed in \cref{sec:doppler}.
Note that \cref{sec:video} and \cref{sec:lidar} experiment with settings of a linear observation model.

We design 5 benchmarks within the spectrum of complexity between Toy and Free Motion.
Each benchmark is defined as a subset of the following properties, as specified in \cref{tab:benchmarks} and visualized in \cref{fig:trajectories}:
\begin{itemize}
    \item \textit{anisotropic}: horizontal motion is more likely than vertical (otherwise direction is distributed uniformly).
    % \vspace{-0.05cm}
    \item \textit{polar}: radar noise is generated i.i.d in spherical coordinates (otherwise noise is Cartesian i.i.d).
    \item \textit{uncentered}: targets are dispersed in different locations far from the radar (otherwise they are concentrated in the center).
    \item \textit{acceleration}: speed change is allowed (through intervals of constant acceleration).
    \item \textit{turns}: non-straight motion is allowed.
\end{itemize}
% An optimal analytic solution of each benchmark is hard to derive -- even if the benchmark assumptions are noticed and modeled correctly. However, in Section~\ref{sec:toy_analysis} we theoretically analyze the Toy benchmark, showing that its single assumption violation is sufficient to significantly modify the optimal parameters $R$.

\begin{table}%[b]
\centering
\caption{\small Benchmarks and the properties that define them. ``V'' means that the benchmark satisfies the property.}
\label{tab:benchmarks}
\setlength\tabcolsep{5pt}
\begin{tabular}{|c|ccccc|}
\hline
Benchmark & \rotatebox[origin=c]{60}{anisotropic} & \rotatebox[origin=c]{60}{polar} & \rotatebox[origin=c]{60}{uncentered} & \rotatebox[origin=c]{60}{acceleration} & \rotatebox[origin=c]{60}{turns} \\
\hline
Toy & O & O & O & O & O \\
Close & V & V & O & O & O \\
Const\_v & V & V & V & O & O \\
Const\_a & V & V & V & V & O \\
Free & V & V & V & V & V \\
\hline
\end{tabular}
\end{table}

\begin{figure}%[!h]
% \vspace{-10pt}
\centering
\begin{subfigure}{.19\textwidth}
  \centering
  \includegraphics[width=1.\linewidth]{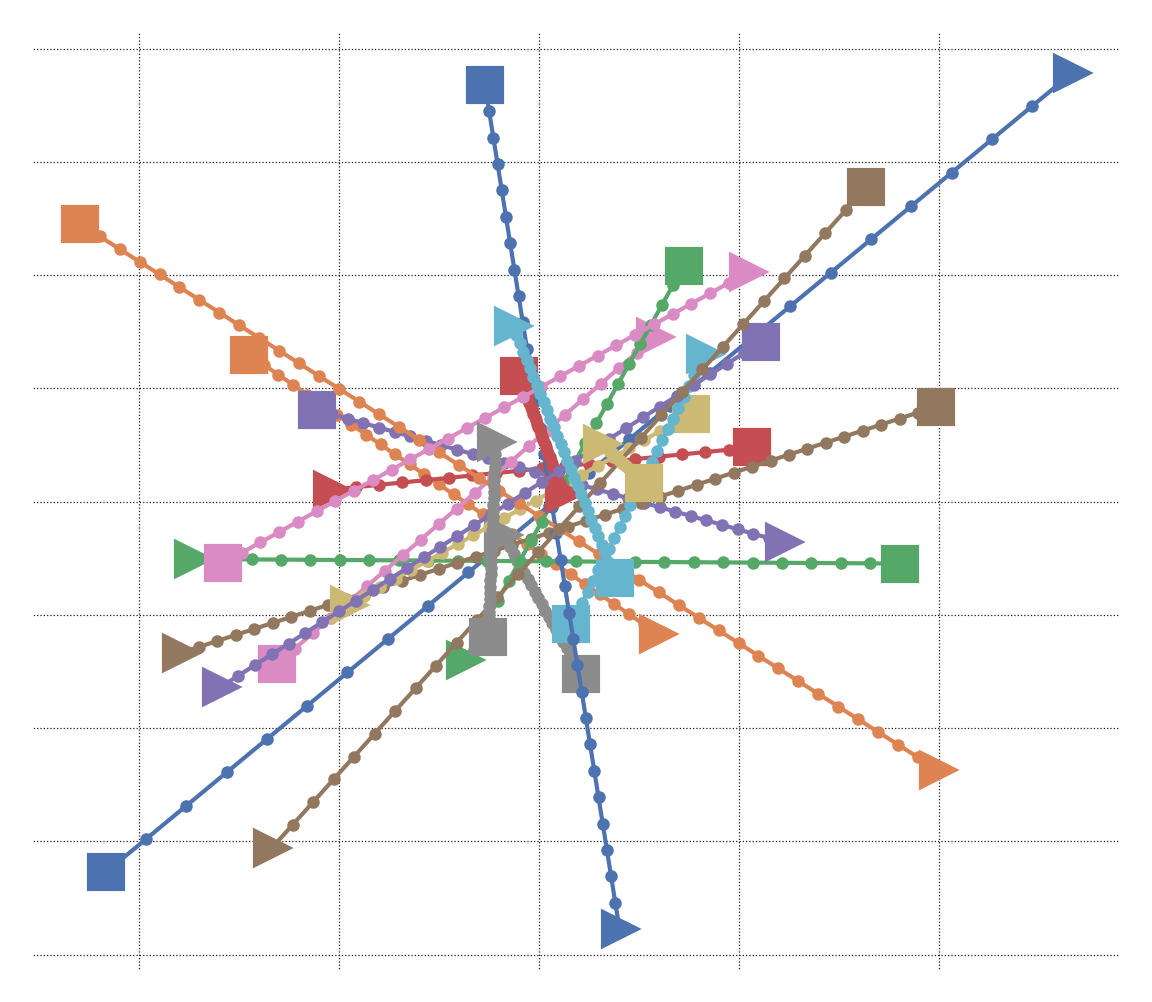}
  \caption{Toy}
\end{subfigure}
\begin{subfigure}{.19\textwidth}
  \centering
  \includegraphics[width=1.\linewidth]{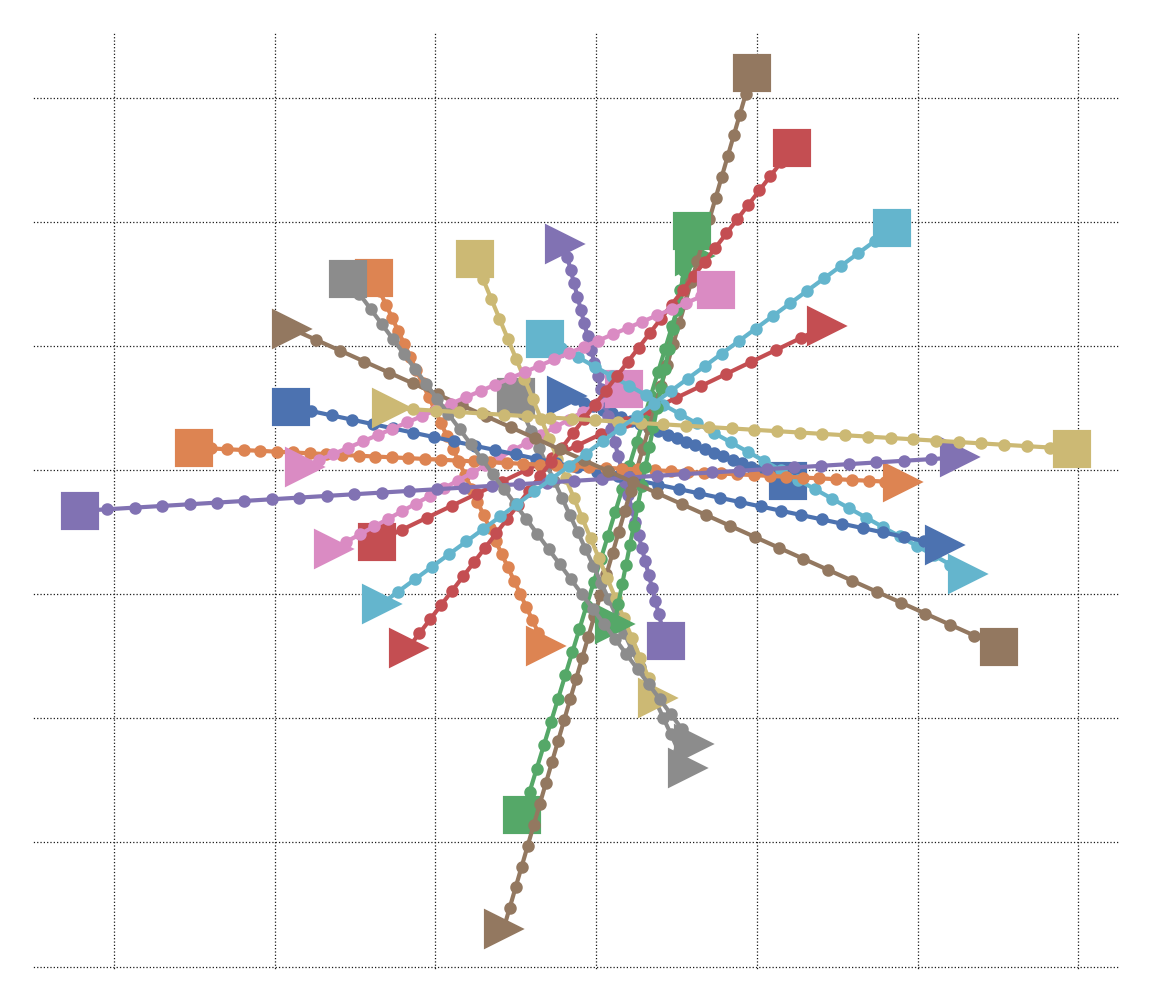}
  \caption{Close}
\end{subfigure}
\begin{subfigure}{.19\textwidth}
  \centering
  \includegraphics[width=1.\linewidth]{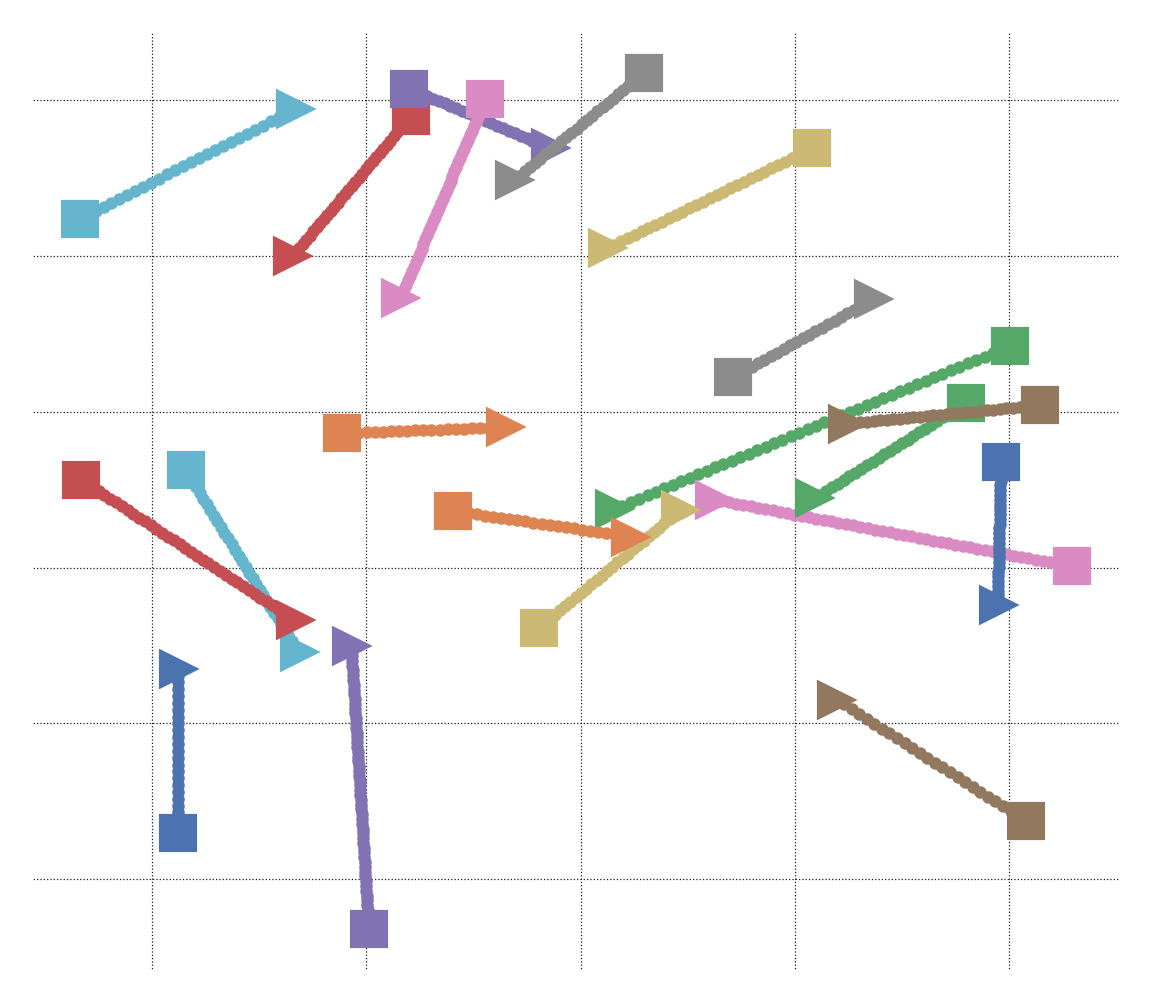}
  \caption{Const\_v}
\end{subfigure} %\\
\begin{subfigure}{.19\textwidth}
  \centering
  \includegraphics[width=1.\linewidth]{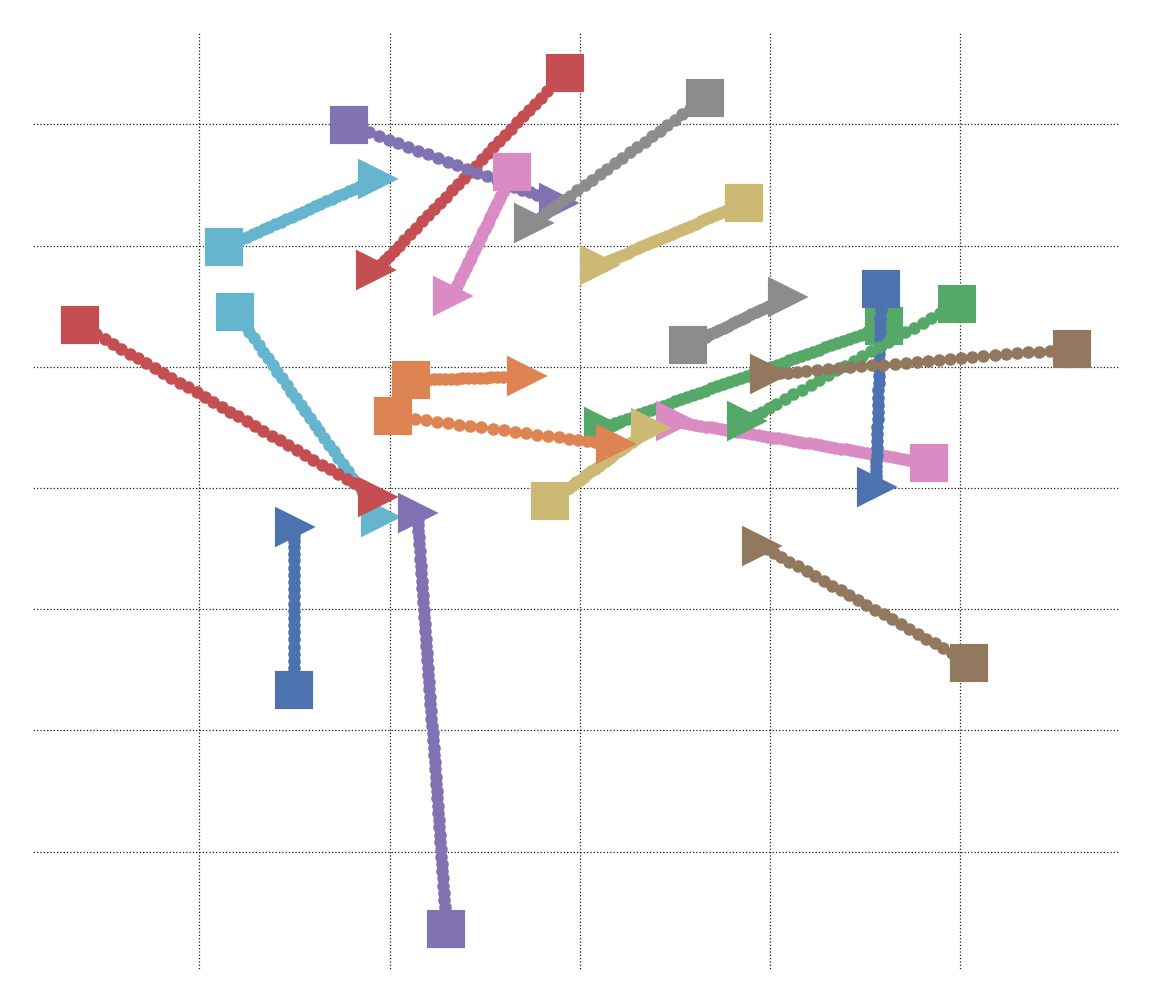}
  \caption{Const\_a}
  \label{fig:trajs_consta}
\end{subfigure}
\begin{subfigure}{.19\textwidth}
  \centering
  \includegraphics[width=1.\linewidth]{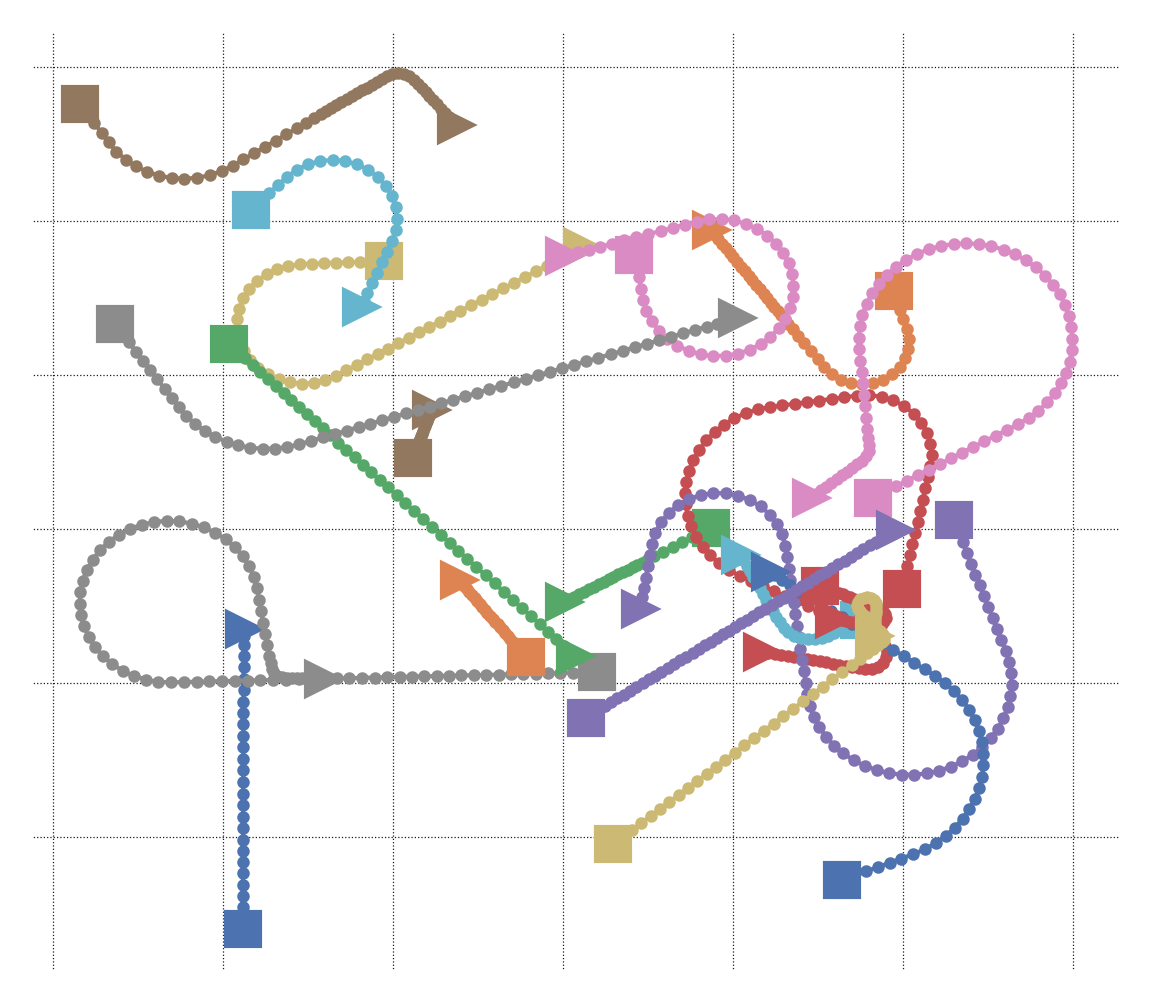}
  \caption{Free motion}
  \label{fig:trajs_free}
\end{subfigure}
\caption{\small Samples of targets trajectories in the various benchmarks, projected onto the XY plane.}
\label{fig:trajectories}
\end{figure}

% \begin{figure}[!h]
%   \centering
%   \includegraphics[width=1.\linewidth]{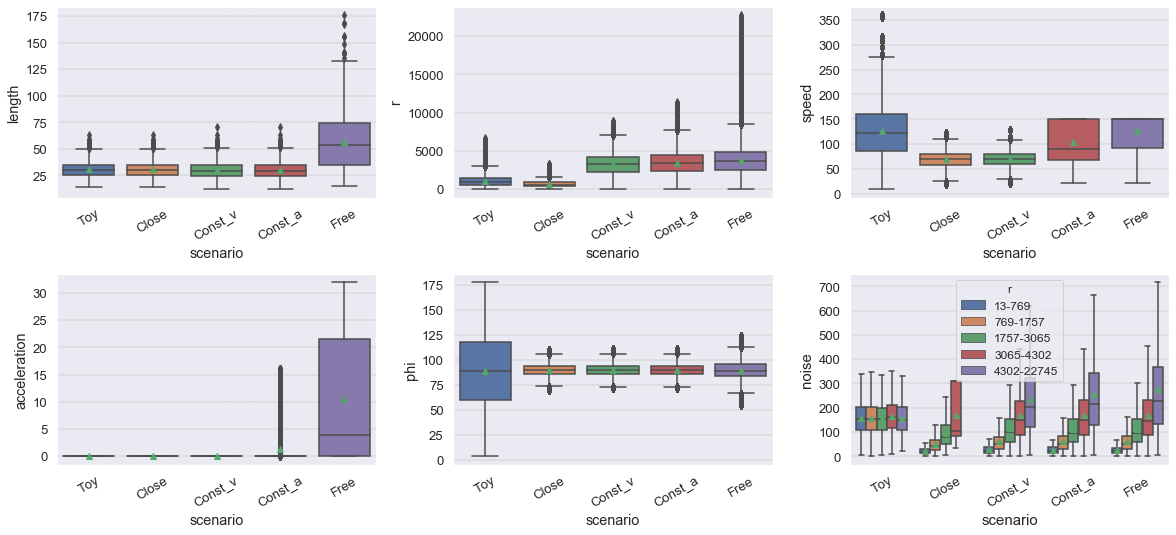}
%     \caption{\small Descriptive statistics of the various benchmarks of Section~\ref{sec:OKF}: duration of targets trajectories; distance from the radar; targets speed; targets acceleration; motion direction (90 degrees correspond to horizontal motion); and observation errors vs. distance from radar.}
%     \label{fig:benchmarks_EDA}
% \end{figure}

% \begin{figure}%[!t]
% \centering
% \includegraphics[width=.5\linewidth]{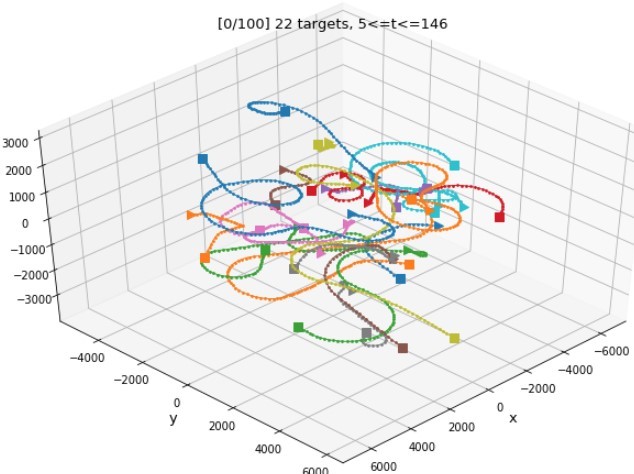}
% \caption{\small A sample of targets trajectories in the Free-Motion benchmark in the radar problem.}
% \label{fig:sample_episode}
% \end{figure}

\textbf{Baselines (KF variants):}
All the experiments above compare OKF to the standard KF baseline.
In practice, other variants of the KF are often in use.
Here we define 4 such variants as different baselines to the experiments.
In each experiment, we compare the baseline tuned by \cref{algo:KF} to its Optimized version trained by \cref{algo:OKF} (denoted with the prefix ``O'' in its name).
For \cref{algo:OKF}, we use the Adam optimizer with a single training epoch over the 1500 training trajectories, 10 trajectories per training batch, and learning rate of 0.01.
The optimization was run for all baselines in parallel and required a few minutes per benchmark, on eight i9-10900X CPU cores in a single Ubuntu machine.

The different baselines are designed as follows.
\textit{EKF} baselines use the non-linear Extended KF model \citep{KF_theory}.
The EKF replaces the approximation $H \approx H(z)$ of \cref{sec:nkf} with $H \approx \nabla_x h(\hat{x})$, where $h(x)=H(x)\cdot x$ and $\tilde{x}$ is the current state estimate.
\textit{Polar} baselines (denoted with ``p'') represent the observation noise $R$ with spherical coordinates, in which the polar radar noise is i.i.d.

\begin{table*}%[b]
\centering
\caption{\small Test MSE results of \cref{algo:KF} and \cref{algo:OKF} over 5 benchmarks (scenarios) and 4 baselines (variants of KF).
For KFp we also consider an ``oracle'' baseline with perfect knowledge of the noise.}
\label{tab:res_OKF}
\setlength\tabcolsep{5pt}
\begin{tabular}{|c|cc|ccc|cc|cc|}
\hline
Benchmark & KF & OKF & KFp & KFp (oracle) & OKFp & EKF & OEKF & EKFp & OEKFp \\
\hline
Toy & 151.7 & 84.2 & 269.6 & -- & 116.4 & 92.8 & {\bf 79.4} & 123.0 & 109.1 \\
Close & 25.0 & 24.8 & 22.6 & 22.5 & {\bf 22.5} & 26.4 & 26.1 & 24.5 & 24.1 \\
Const\_v & 90.2 & 90.0 & 102.3 & 102.3 & {\bf 89.2} & 102.5 & 99.7 & 112.7 & 102.1 \\
Const\_a & 107.5 & 101.6 & 118.4 & 118.3 & {\bf 100.3} & 110.0 & 107.0 & 126.0 & 108.7 \\
Free & 125.9 & 118.8 & 145.6 & 139.3 & {\bf 117.9} & 135.8 & 121.9 & 149.3 & 120.0 \\
\hline
\end{tabular}
\end{table*}

\begin{figure}%[h]%[b]
% \vspace{-10pt}
\centering
\begin{subfigure}{.8\textwidth}
  \centering
  \includegraphics[width=1.\linewidth]{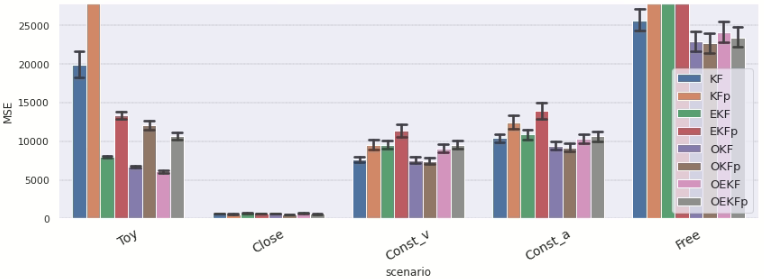}
  \caption{}
  \label{fig:res_all_KF}
\end{subfigure} \\
\begin{subfigure}{.45\textwidth}
  \centering
  \includegraphics[width=1.\linewidth]{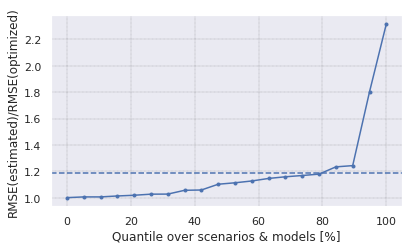}
  \caption{}
  \label{fig:res_relative}
\end{subfigure}
\caption{\small Summary of the test MSE of \cref{algo:KF} and \cref{algo:OKF} in different benchmarks (scenarios) and baselines. This is a different presentation of the results of \cref{tab:res_OKF}.
(a) also includes 95\% confidence intervals.
(b) shows, for each of the 20 experiments (5 benchmarks $\times$ 4 baselines), the MSE ratio between \cref{algo:KF} and \cref{algo:OKF}. We see that \cref{algo:OKF} wins in \textit{all} the experiments (ratio is always larger than 1) -- in some cases by large margins.
The dashed line represents the average MSE ratio over over all the experiments, showing an average advantage of $20\%$ to OKF.}
\label{fig:res_OKF_detailed}
\end{figure}

\textbf{Results:}
\cref{tab:res_OKF} summarizes the test errors (MSE) in all the experiments.
In each cell, the left column corresponds to the baseline \cref{algo:KF}, and the right to \cref{algo:OKF}.
In the model names, ``O'' stands for optimized, ``E'' for EKF and ``p'' for polar (or spherical).
The same results are also shown with confidence intervals in \cref{fig:res_OKF_detailed}.
Below we discuss the main findings.

\textbf{Choosing the KF configuration is not trivial:}
Consider the non-optimized KF baselines (left column in every cell in \cref{tab:res_OKF}).
In each benchmark, the results are sensitive to the baseline, i.e., to the choice of KF configuration -- $R$'s coordinates and whether to use EKF.
For example, in the Toy benchmark, EKF is the best design, since the observation model $H$ is non-linear.
In other benchmarks, however, the winning baselines may come as a surprise:
\begin{enumerate}
    \item Under non-isotropic motion direction (all benchmarks except Toy), EKF is worse than KF despite the non-linearity. It is possible that the horizontal prior reduces the stochasticity of $H$, making the derivative-based approximation unstable.
    % the advantage of EKF no longer justifies the instability of the derivative-based approximation.
    \item Even when the observation noise is spherical i.i.d, spherical representation of $R$ is not beneficial when targets are scattered far from the radar (last 3 benchmarks). It is possible that with distant targets, Cartesian coordinates have a more important role in expressing the horizontal prior of the motion.
\end{enumerate}
Since the best KF variant per benchmark seems hard to predict in advance, a practical system cannot rely on choosing the KF variant optimally -- and should rather be robust to this choice.
% While our interpretation of these results is open to debate, it would definitely be nontrivial to predict them in advance. In particular, in the more complex benchmarks, the default KF is the preferred model among the ones tuned by noise estimation.

\textbf{OKF is more accurate \textit{and} more baseline-robust:}
For \textit{every} benchmark and \textit{every} baseline (20 experiments in total), OKF (right column) outperformed noise estimation (left column).
% Note that OKF wins even in the Toy scenario, under the slightest violation of KF assumptions. %, and a fortiori wins in the more complex scenarios.
In addition, the variance between the baselines reduces under optimization, i.e., OKF makes the KF more robust to the selected configuration. % (which is also evident in Figure~\ref{fig:res_all_KF} in the supplementary material).

\textbf{OKF outperforms an \textit{oracle} baseline:}
We designed an ``oracle'' KF baseline -- with perfect knowledge of the observation noise covariance $R$ in spherical coordinates. We used it for all benchmarks except for Toy (in which the radar noise is not generated in spherical coordinates).
Note that in the constant-speed benchmarks (Close and Const\_v), $Q=0$ and is estimated quite accurately; hence, in these benchmarks the oracle has a practically perfect knowledge of both noise covariances.
Nevertheless, the oracle yields very similar results to \cref{algo:KF}.
This indicates that \textbf{the benefit of OKF is not in a better estimation accuracy of $Q$ and $R$, but rather in optimizing the desired objective}.

%%%%%%%%%%%%%%%%%%%%%%%%%%%%%%%%%%%%%%%%%%%%%%%%%

\FloatBarrier
\subsection{Sensitivity to Train Dataset Size}
\label{app:okf_train_size}

Each benchmark in the case-study of \cref{app:okf_detailed} has 1500 targets in its train data.
One may argue that numeric optimization may be more sensitive to smaller datasets than noise estimation; and even more so, when taking into account that the optimization procedure "wastes" a portion of the train data as a validation set.

In this section we test this concern empirically, by repeating some of the experiments of \cref{app:okf_detailed} with smaller subsets of the train datasets -- beginning from as few as 20 training trajectories.
\cref{fig:train_size} shows that the advantage of OKF over KF holds consistently for all sizes of train datasets, although it is indeed increases with the size.
Interestingly, in the Free Motion benchmark, \textbf{the test errors of KF and KFp \textit{increase} with the amount of train data!}

\begin{figure}[!h]
\centering
    \includegraphics[width=0.9\linewidth]{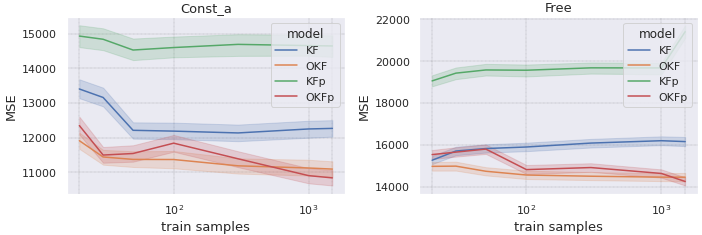}
\caption{\small The advantage of OKF over KF holds consistently for all sizes of train datasets -- including as small datasets as 20 trajectories. The shadowed areas correspond to 95\% confidence intervals.}
\label{fig:train_size}
\end{figure}

%%%%%%%%%%%%%%%%%%%%%%%%%%%%%%%%%%%%%%%%%%%%%%%%%

\FloatBarrier
\subsection{Generalization: Sensitivity to Distributional Shifts}
\label{app:okf_generalization}

In \cref{app:okf_detailed}, we demonstrate the robustness of OKF in different tracking scenarios: in every benchmark, OKF outperformed the standard KF over {\bf out-of-sample test data}.
This means that OKF did not overfit the noise in the training data.
What about \textbf{out-of-distribution test data}?
OKF learns patterns from the specific distribution of the train data -- how well will it generalize to different distributions?

\cref{sec:nkf} already addresses this question to some extent, as OKF outperformes both KF and NKF over out-of-distribution target accelerations (affecting both speed changes and turns radius).
In terms of \cref{eq:KF_model}, the modified acceleration corresponds to different magnitudes of motion noise $Q$; that is, we change the noise \textit{after} OKF optimized the noise parameters. Yet, OKF adapted to the change without further optimization, with better accuracy than the standard KF.
Thus, the results of \cref{sec:nkf} already provide a significant evidence for the robustness of OKF to certain distributional shifts.

\begin{figure}%[!h]
\centering
\begin{subfigure}{\textwidth}
    \includegraphics[width=\linewidth]{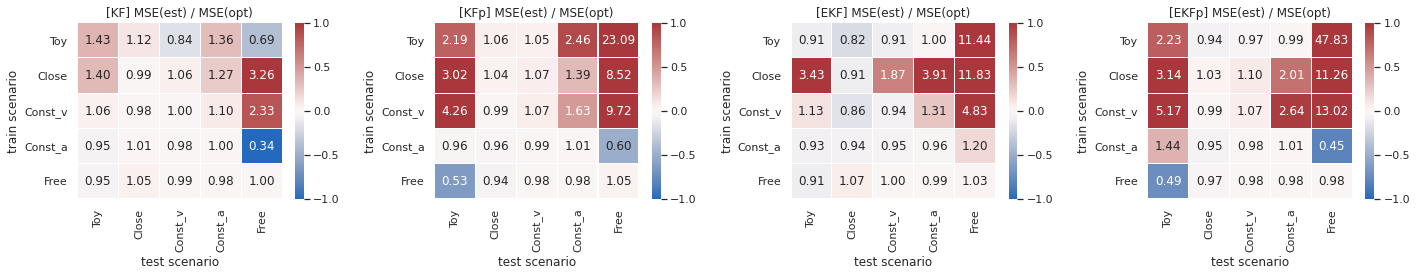}
    \caption{\small $MSE\_ratio = MSE(KF) / MSE(OKF)$ for every KF-baseline (KF,KFp,EKF,EKFp defined in \cref{app:okf_detailed}), and for every pair of train-scenario and test-scenario. The colormap scale is logarithmic ($\propto log(MSE\_ratio)$), where red values represent advantage to OKF ($MSE\_ratio>1$).}
    \label{fig:cross_scenarios_detailed}
\end{subfigure}
\begin{subfigure}{.5\textwidth}
    \includegraphics[width=\linewidth]{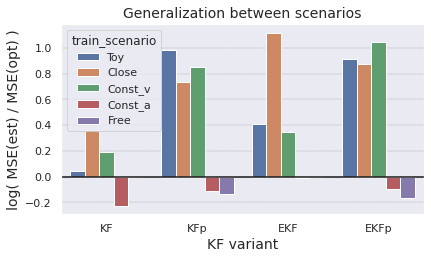}
    \caption{\small For every train-scenario, $MSE\_ratio$ is averaged over all the test-scenarios and is shown in a logarithmic scale. Positive values indicate advantage to OKF.}
    \label{fig:cross_scenarios_summary}
\end{subfigure}
\caption{\small Generalization tests: OKF vs. KF under distributional shifts between scenarios.}
\label{fig:cross_scenarios}
\end{figure}

In this section, we present a yet stronger evidence for the robustness of OKF -- not over a parametric distributional shift, but over {\bf entirely different benchmarks}. Specifically, we consider the 5 benchmarks (or scenarios) of \cref{app:okf_detailed}. For every pair (train-scenario, test-scenario), we train both KF and OKF on data of the train-scenario, then test them on data of the test-scenario.
For every such pair of scenarios, we measure the generalization advantage of OKF over KF through $MSE\_ratio = MSE(KF) / MSE(OKF)$ (where $MSE\_ratio>1$ indicates advantage to OKF). To measure the total generalization advantage of a model trained on a certain scenario, we calculate the geometric mean of $MSE\_ratio$ over all the test-scenarios (or equivalently, the standard mean over the logs of the ratios). The logarithmic scale guarantees a symmetric view of this metric of ratio between two scores.

This test is quite noisy, since a model optimized for a certain scenario may legitimately be inferior in other scenarios. Yet, considering all the results together in \cref{fig:cross_scenarios}, it is evident that OKF provides more robust models: it generalizes better in most cases, sometimes by a large margin; and loses only in a few cases, always by a small margin.

%%%%%%%%%%%%%%%%%%%%%%%%%%%%%%%%%%%%%%%%%%%%%%%%%

\FloatBarrier
\subsection{Ablation Test: Diagonal Optimization}
\label{sec:diagonal}

The main challenge in \cref{sec:okf} and \cref{algo:OKF} is to apply standard numeric optimization while keeping the symmetric and positive-definite constraints (SPD) of the parameters $Q,R$. To that end, we apply the Cholesky parameterization to $Q$ and $R$. In this section, we study the importance of this parameterization via an ablation test.

For the ablation, we define a naive version of OKF with a diagonal parameterization of $Q$ and $R$.
Such parameterization is common in the literature: \say{since both the covariance matrices must be constrained to be positive semi-definite, $Q$ and $R$ are often parameterized as diagonal matrices} \citep{noise_cov_estimation}.
We denote the diagonal variant by \textit{DKF}, and test it on all 5 Doppler benchmarks of \cref{app:okf_detailed}.

\cref{tab:ablation} displays the results.
In the first 3 benchmarks, where the target motion is linear, DKF is indistinguishable from OKF.
However, in the 2 non-linear benchmarks (which are also the same benchmarks used in \cref{sec:nkf}), DKF is inferior to OKF, though still outperforms the standard KF.

\begin{table*}%[b]
\centering
\caption{\small Ablation test: Diagonal optimized KF, tested on the 5 benchmarks of \cref{app:okf_detailed}.}
\label{tab:ablation}
\setlength\tabcolsep{5pt}
\begin{tabular}{|c|ccc|}
\hline
Benchmark & KF & DKF & OKF \\
\hline
Toy & 151.7 & 84.2 & 84.2 \\
Close & 25.0 & 24.8 & 24.8 \\
Const\_v & 90.2 & 90.1 & 90.0 \\
Const\_a & 107.5 & 106.1 & 101.6 \\
Free & 125.9 & 121.1 & 118.8 \\
\hline
\end{tabular}
\end{table*}

%%%%%%%%%%%%%%%%%%%%%%%%%%%%%%%%%%%%%%%%%%%%%%%%%

\FloatBarrier
\subsection{Video Tracking: Dataset License}
\label{sec:mot20_license}

The \href{https://motchallenge.net/data/MOT20/}{MOT20 video dataset} \citep{MOT20} is available under \textit{Creative Commons Attribution-NonCommercial-ShareAlike 3.0} License.
In \cref{sec:video}, we used the videos MOT20-01, MOT20-02 and MOT20-03 for training, and MOT20-05 for testing.

%%%%%%%%%%%%%%%%%%%%%%%%%%%%%%%%%%%%%%%%%%%%%%%%%

\FloatBarrier
\subsection{Lidar-based State Estimation: Visualization}
\label{sec:lidar_detailed}

\cref{fig:lidar_vis} visualizes a sample of the simulated trajectories and the model predictions in the lidar-based state estimation of \cref{sec:lidar}.

\begin{figure}[h]
\centering
\begin{subfigure}{.28\linewidth}
    \includegraphics[width=\linewidth]{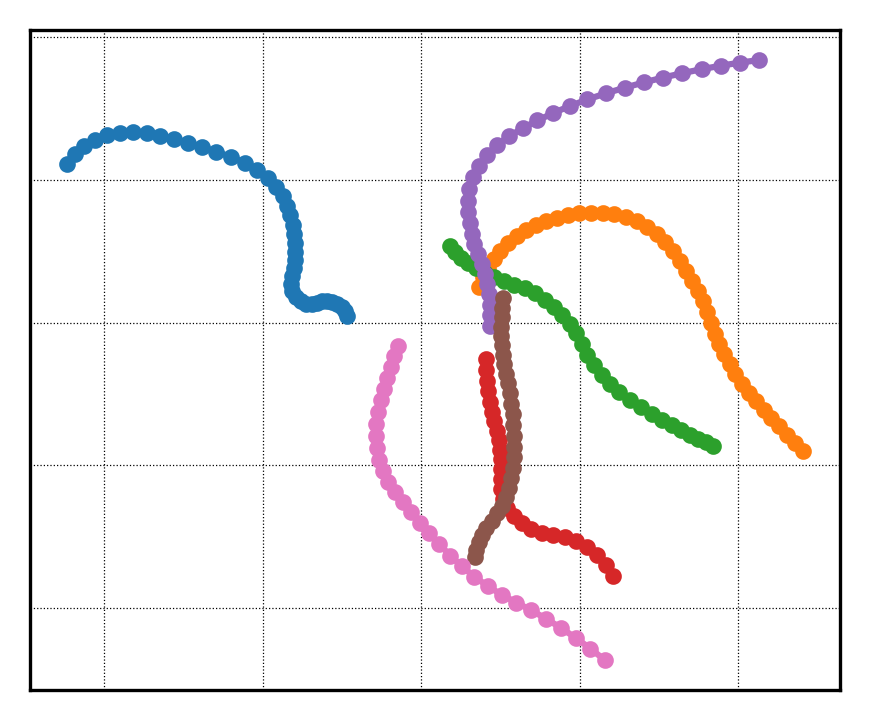}
    \caption{} %Sample trajectories\\ \phantom{X}}
    \label{fig:lidar_trajectories}
\end{subfigure}
\begin{subfigure}{.30\linewidth}
    \includegraphics[width=\linewidth]{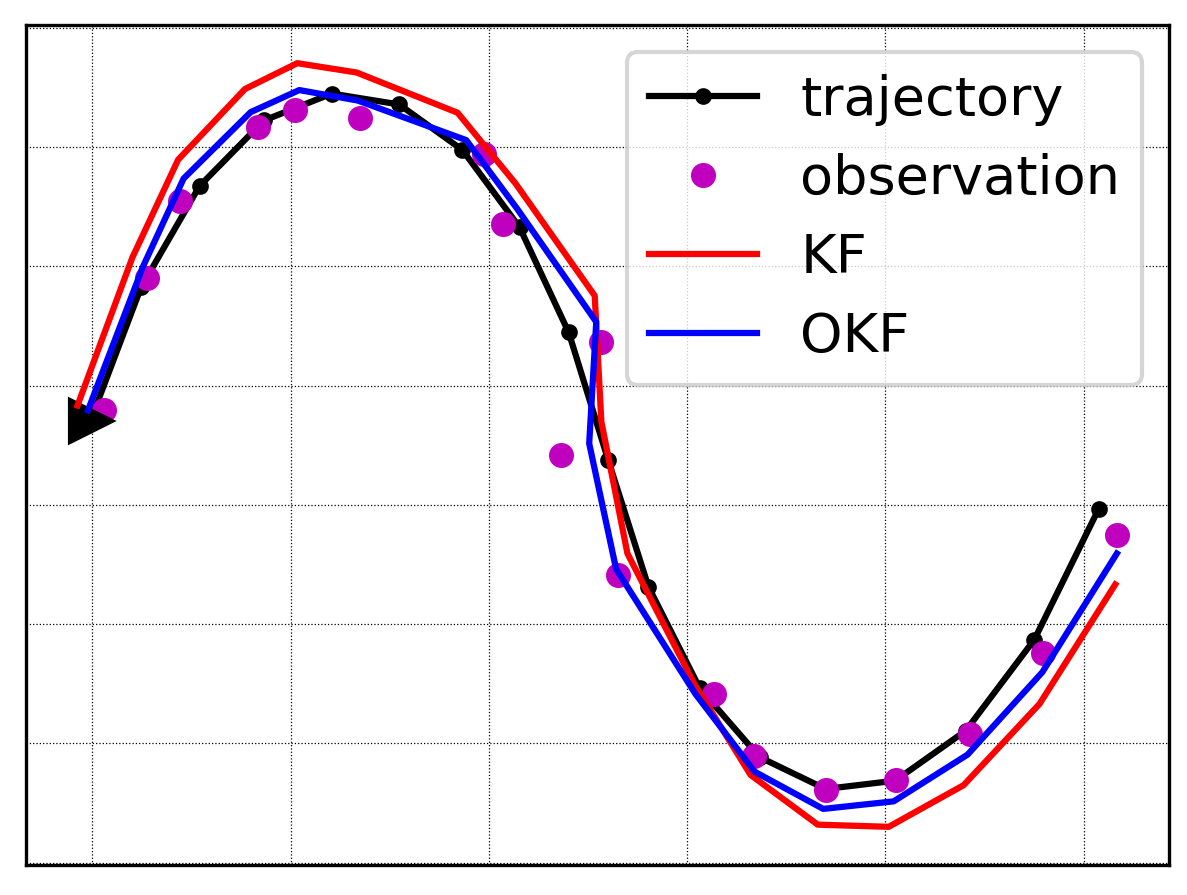}
    \caption{} %Turn segments in a sample trajectory + predictions}
    \label{fig:lidar_sample}
\end{subfigure} \\
% \begin{subfigure}{.66\linewidth}
%     \centering
%     \includegraphics[width=0.8\linewidth]{figures/lidar/lidar_SE.png}
%     \caption{Estimation errors (dashed=mean)}
%     \label{fig:lidar_MSE}
% \end{subfigure}
\caption{\small (a) A sample of simulated self-driving trajectories. (b) Segments of turns  within a sample trajectory, and the corresponding lidar-based estimations.}
% The lidar benchmark: (a) A sample of simulated self-driving trajectories. (b) Turn segments within a sample trajectory, and the corresponding predictions. (c) Estimation errors of KF and OKF.}
\label{fig:lidar_vis}
\end{figure}

%%%%%%%%%%%%%%%%%%%%%%%%%%%%%%%%%%%%%%%%%%%%%%%%%
%%%%%%%%%%%%%%%%%%%%%%%%%%%%%%%%%%%%%%%%%%%%%%%%%

\FloatBarrier
\section{Neural KF: Extended Discussion and Experiments}
\label{app:nkf}

%%%%%%%%%%%%%%%%%%%%%%%%%%%%%%%%%%%%%%%%%%%%%%%%%

\textbf{Preliminaries -- RNN and LSTM:}
% \textit{Neural networks} (NN) are parametric functions, usually constructed as a sequence of matrix-multiplications with some non-linear differentiable transformation between them. NNs are known to be able to approximate complicated functions, given that the right parameters are chosen. Optimization of the parameters of NNs is a field of vast research for decades, and usually relies on gradient-based methods, that is, calculating the errors of the NN with relation to some training-data of inputs and their desired outputs, deriving the errors with respect to the network's parameters, and moving the parameters against the direction of the gradient.
\textit{Recurrent neural networks} (RNN)~\citep{RNN} are neural networks that are intended to be iteratively fed with sequential data samples, and that pass information (the \textit{hidden state}) over iterations. Every iteration, the hidden state is fed to the next copy of the network as part of its input, along with the new data sample.
\textit{Long Short Term Memory} (LSTM)~\citep{LSTM} is an architecture of RNN that is particularly popular due to the linear flow of the hidden state over iterations, which allows to capture memory for relatively long term.
The parameters of a RNN are usually optimized in a supervised manner with respect to a training dataset of input-output pairs.

%%%%%%%%%%%%%%%%%%%%%%%%%%%%%%%%%%%%%%%%%%%%%%%%%

\textbf{Neural Kalman Filter:}
We introduce the Neural Kalman Filter (NKF), which incorporates an LSTM model into the KF framework.
The framework provides a probabilistic representation (rather than point estimate) and a separation between the prediction and update steps.
The LSTM is an architecture of recurrent neural networks, and is a key component in many SOTA algorithms for non-linear sequential prediction~\citep{process_prediction_review}. We use it for the non-linear motion prediction.

As shown in \cref{fig:NKF_diagram}, NKF uses separate LSTM networks for prediction and update steps.
In the prediction step, the target \textit{acceleration} is predicted on top of the linear motion model, instead of predicting the state directly. This regularized formulation is intended to express our domain knowledge about the kinematic motion of physical targets.
% Other variants of NKF were also tested, as discussed in \cref{app:nkf}.
% We made honest efforts to engineer the best NKF architecture for the problem; nevertheless, our methodological insight holds regardless of the technical quality of NKF.

\begin{figure}[!h]
\centering
\includegraphics[width=0.55\linewidth]{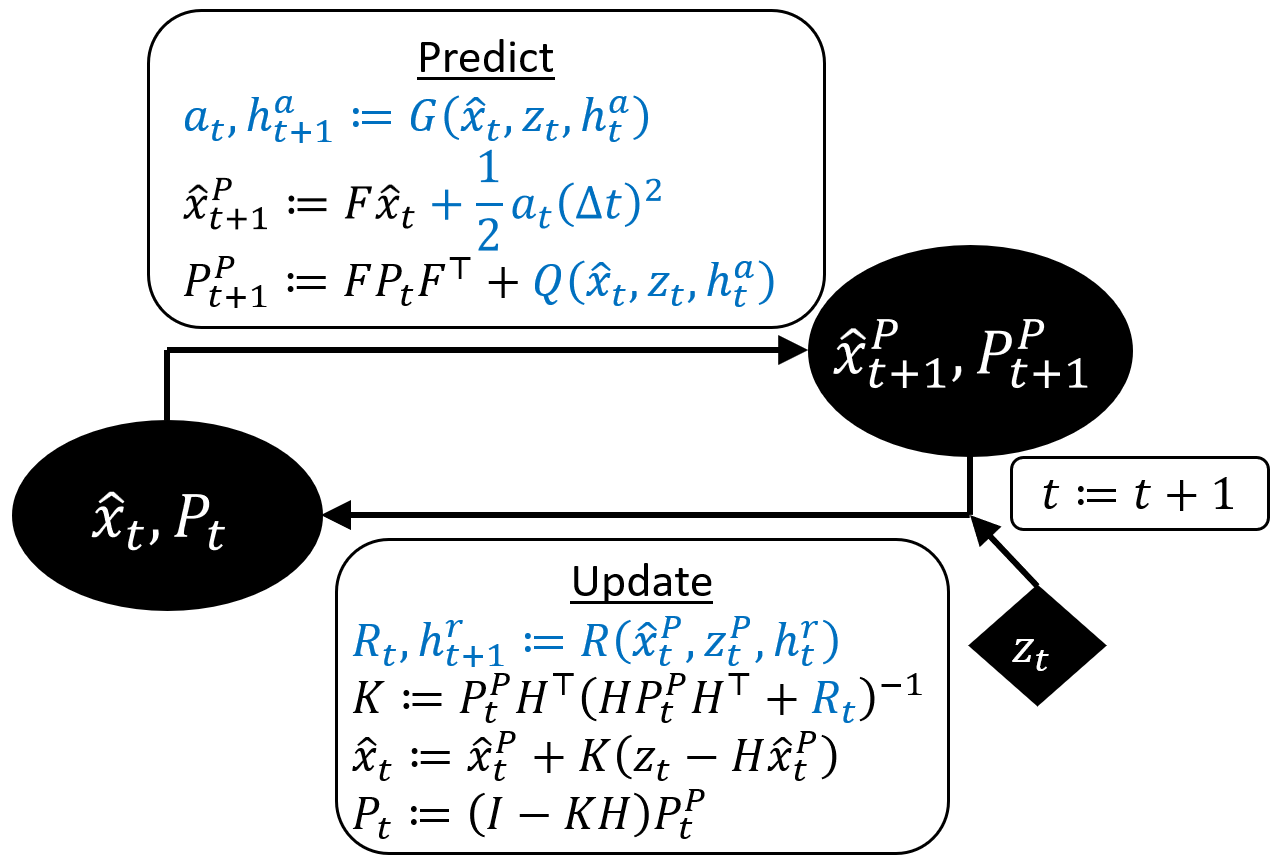}
\caption{\small The Neural Kalman Filter (NKF). Differences from \cref{fig:KF} are highlighted.
$\Delta t$ is constant; $G,Q$ are the outputs of an LSTM network with hidden state $h_a$; and $R$ is the output of an LSTM with hidden state $h_r$.
% In addition to the raw $x_t,z_t$, the networks are also fed with certain manually-crafted features (e.g., the projection of $z_t-x_t$ on the perpendicular direction $x_t^\perp$).
}
\label{fig:NKF_diagram}
\end{figure}

%%%%%%%%%%%%%%%%%%%%%%%%%%%%%%%%%%%%%%%%%%%%%%%%%

\textbf{Extended experiments:}
We extend the experiments of \cref{sec:nkf} with additional versions of NKF:
\begin{itemize}
    \item Predicted-acceleration KF (\textbf{aKF}): a variant of NKF that predicts the acceleration but not the covariances $Q$ and $R$.
    \item Neural KF (\textbf{NKF}): the model used in \cref{sec:nkf} and illustrated in \cref{fig:NKF_diagram}.
    \item Neural KF with H-prediction (\textbf{NKFH}): a variant of NKF that also predicts the observation model $H$ in every step.
\end{itemize}
In addition, while we still train with MSE loss, we add the test metric of Negative-Log-Likelihood (NLL) -- of the true state w.r.t the estimated distribution. Note that the NLL has an important role in the multi-target matching problem (which is out of the scope of this work).

% We extend the results of \cref{sec:nkf} with further experiments.
% First, we add a second variant of the Doppler problem, where the targets do not turn, but may still have acceleration (benchmark Const\_a of \cref{app:okf_detailed}).
% Second, while we still train with MSE loss, we add the test metric of Negative-Log-Likelihood (NLL) -- of the true state w.r.t the estimated distribution. Note that the NLL has an important role in the multi-target matching problem (which is out of the scope of this work).
% Third, we test 3 different versions of NKF:
% % To show that these results are not exclusive to the experiments of Section~\ref{sec:NKF}, we present extended experiments in this section.
% % Specifically, we consider 2 different benchmarks -- the free-motion benchmark of Section~\ref{sec:NKF} and Const\_a benchmark of Section~\ref{sec:OKF} (in which the targets may have acceleration but not turns). We also consider 3 different neural models:
% \begin{itemize}
%     \item Predicted-acceleration KF (\textbf{aKF}): a variant of NKF that predicts the acceleration but not the covariances $Q$ and $R$.
%     \item Neural KF (\textbf{NKF}): the model used in \cref{sec:nkf} and illustrated in \cref{fig:NKF_diagram}.
%     \item Neural KF with H-prediction (\textbf{NKFH}): a variant of NKF that also predicts the observation model $H$ in every step.
% \end{itemize}

For each benchmark and each model, we train the model on train data with a certain range of targets acceleration (note that acceleration affects both speed changes and turns sharpness), and tested it on targets with different acceleration ranges, some of them account for distributional shifts.
For each model we train two variants -- one with Cartesian representation of the observation noise $R$, and one with spherical representation (as in the baselines of \cref{app:okf_detailed}) -- and we select the one with the higher validation MSE (where the validation data is a portion of the data assigned for training).

\begin{figure}%[!h]
\centering
\begin{subfigure}{0.9\textwidth}
    \includegraphics[width=\linewidth]{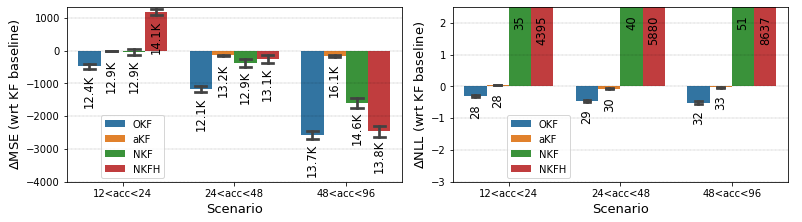}
    \caption{\small Free-motion benchmark}
    \label{fig:NKF_extended_free}
\end{subfigure}
\begin{subfigure}{0.9\textwidth}
    \includegraphics[width=\linewidth]{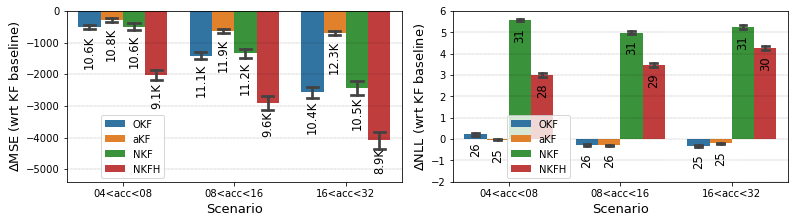}
    \caption{\small Const\_a benchmark (no turns)}
    \label{fig:NKF_extended_consta}
\end{subfigure}
\caption{\small The \textit{relative} MSE and NLL results of various models in comparison to the standard KF model. The textual labels specify the \textit{absolute} MSE and NLL. Note that certain bars of NLL are of entirely different scale and thus are cropped in the figure (their values can be seen in the labels).
In each benchmark, the models were trained with relation to MSE loss, on train data of the middle acceleration-range: the two other acceleration ranges in each benchmark correspond to generalization over distributional shifts.}
\label{fig:NKF_extended}
\end{figure}

\cref{fig:NKF_extended_free} shows that in the free-motion benchmark, all the 3 neural models improve the MSE in comparison to the standard KF, yet are outperformed by OKF.
Furthermore, while OKF has the best NLL, the more complicated models NKF and NKFH increase the NLL in orders of magnitude. %This issue may be handled by explicitly optimizing the NLL in addition to the MSE, but such multi-loss optimization is out of the scope of this work.
Note that the instability of NKFH is expressed in poor generalization to lower accelerations in addition to the extremely high NLL score.

\cref{fig:NKF_extended_consta} shows that in Const\_a benchmark, all the 3 neural models improve the MSE in comparison to the standard KF, but only NKFH improves in comparison to OKF as well.
On the other hand, NKFH still suffers from very high NLL.
% Note that while NKFH does better in this case than in the free-motion benchmark, it still suffers from very high NLL.

In summary, all 3 variants of NKF outperform the standard KF in both benchmarks in terms of MSE.
However, when comparing to OKF instead, aKF and NKF become inferior, and the comparison between NKFH and OKF depends on the selected benchmark and metric.

%%%%%%%%%%%%%%%%%%%%%%%%%%%%%%%%%%%%%%%%%%%%%%%%%
%%%%%%%%%%%%%%%%%%%%%%%%%%%%%%%%%%%%%%%%%%%%%%%%%

\end{document}